\begin{document}
\newtheorem{mytheorem}{Theorem}
\newtheorem{mylemma}{Lemma}
\newcommand{\argmax}{\mathop{\rm argmax}\limits}
\newcommand{\argmin}{\mathop{\rm argmin}\limits}

\newcommand{\hilight}[1]{{#1}}
\newcommand{\highlight}[2][yellow]{\mathchoice%
	{\colorbox{#1}{$\displaystyle#2$}}%
	{\colorbox{#1}{$\textstyle#2$}}%
	{\colorbox{#1}{$\scriptstyle#2$}}%
	{\colorbox{#1}{$\scriptscriptstyle#2$}}}%

\newcommand{\unorm}[1]{\|#1\|}
\newcommand{\unorms}[1]{\unorm{#1}^2}
\newcommand{\calX}{{\mathcal{X}}}
\newcommand{\calY}{{\mathcal{Y}}}
\newcommand{\gonetwo}{{g_{12}}}
\newcommand{\gonetwobar}{{\bar{g}_{12}}}
\newcommand{\calB}{{\mathcal{B}}}
\newcommand{\calBitwo}{{\mathcal{B}_{i,2}}}
\newcommand{\calBione}{{\mathcal{B}_{i,1}}}
\newcommand{\boldtheta}{{\boldsymbol{\theta}}}
\newcommand{\boldphi}{{\boldsymbol{\phi}}}
\newcommand{\bolddelta}{{\boldsymbol{\delta}}}
\newcommand{\boldthetaP}{{\boldsymbol{\theta}}^{(p)}}
\newcommand{\boldthetaQ}{{\boldsymbol{\theta}}^{(q)}}
\newcommand{\boldthetaPtop}{{\boldsymbol{\theta}}^{(p)\top}}
\newcommand{\factorp}{{\phi}^P}
\newcommand{\factorq}{{\phi}^Q}
\newcommand{\boldalpha}{{\boldsymbol{\alpha}}}
\newcommand{\boldHh}{{\widehat{\boldH}}}
\newcommand{\boldeta}{{\boldsymbol{\eta}}}
\newcommand{\boldH}{{\boldsymbol{H}}}
\newcommand{\boldA}{{\boldsymbol{A}}}
\newcommand{\boldS}{{\boldsymbol{S}}}
\newcommand{\boldK}{{\boldsymbol{K}}}
\newcommand{\boldJ}{{\boldsymbol{J}}}
\newcommand{\boldT}{{\boldsymbol{T}}}
\newcommand{\boldTheta}{{\boldsymbol{\Theta}}}
\newcommand{\boldf}{{\boldsymbol{f}}}
\newcommand{\boldu}{{\boldsymbol{u}}}
\newcommand{\boldm}{{\boldsymbol{m}}}
\newcommand{\boldone}{{\boldsymbol{1}}}
\newcommand{\boldxi}{{\boldsymbol{\xi}}}
\newcommand{\boldv}{{\boldsymbol{v}}}
\newcommand{\boldk}{{\boldsymbol{k}}}
\newcommand{\boldb}{{\boldsymbol{b}}}
\newcommand{\boldbeta}{{\boldsymbol{\beta}}}
\newcommand{\boldDelta}{{\boldsymbol{\Delta}}}
\newcommand{\nnu}{\nsample}
\newcommand{\nsample}{n}
\newcommand{\subsetr}{\boldsymbol{r}}
\newcommand{\boldthetah}{{\widehat{\boldtheta}}}
\newcommand{\mathbbR}{\mathbb{R}}
\newcommand{\KL}{\mathrm{KL}}
\newcommand{\numparams}{n}
\newcommand{\boldhh}{{\widehat{\boldh}}}
\newcommand{\boldh}{{\boldsymbol{h}}}
\newcommand{\Hh}{{\widehat{H}}}
\newcommand{\boldxnu}{\boldY}
\newcommand{\boldx}{{\boldsymbol{x}}}
\newcommand{\boldxp}{{\boldsymbol{x}}_{p}}
\newcommand{\boldxq}{{\boldsymbol{x}}_{q}}
\newcommand{\boldz}{{\boldsymbol{z}}}
\newcommand{\boldg}{{\boldsymbol{g}}}
\newcommand{\boldw}{{\boldsymbol{w}}}
\newcommand{\boldr}{{\boldsymbol{r}}}
\newcommand{\boldQ}{{\boldsymbol{Q}}}
\newcommand{\boldF}{{\boldsymbol{F}}}
\newcommand{\boldzero}{{\boldsymbol{0}}}
\newcommand{\thetahat}{{\hat{\boldsymbol{\theta}}}}
\newcommand{\thetaShat}{{\hat{\boldsymbol{\theta}}_S}}
\newcommand{\thetaSchat}{{\hat{\boldsymbol{\theta}}_{S^c}}}
\newcommand{\zhat}{{\hat{\boldsymbol{z}}}}
\newcommand{\zSchat}{{\hat{\boldsymbol{z}}_{S^c}}}
\newcommand{\zShat}{{\hat{\boldsymbol{z}}_{S}}}
\newcommand{\nde}{\nsample'}
\newcommand{\boldxde}{\boldY'}
\newcommand{\boldX}{{\boldsymbol{X}}}
\newcommand{\boldY}{{\boldsymbol{Y}}}
\newcommand{\boldy}{{\boldsymbol{y}}}
\newcommand{\boldt}{{\boldsymbol{t}}}
\newcommand{\boldYnu}{{\boldsymbol{Y}}}
\newcommand{\boldYde}{{\boldsymbol{Y}}}
\newcommand{\boldpsi}{{\boldsymbol{\psi}}}
\newcommand{\hh}{{\widehat{h}}}
\newcommand{\boldI}{{\boldsymbol{I}}}
\newcommand{\PE}{{\widehat{PE}}}
\newcommand{\ratioh}{\widehat{\ratiosymbol}}
\newcommand{\ratiosymbol}{r}
\newcommand{\ratiomodel}{g}
\newcommand{\thetah}{{\widehat{\theta}}}
\newcommand{\mathbbE}{\mathbb{E}}
\newcommand{\pnu}{p_\mathrm{te}}
\newcommand{\pde}{p_\mathrm{rf}}
\newcommand{\refsection}{\boldS_\mathrm{rf}}
\newcommand{\tesection}{\boldS_\mathrm{te}}
\newcommand{\refY}{\boldY_\mathrm{rf}}
\newcommand{\teY}{\boldY_\mathrm{te}}
\newcommand{\nseg}{n}
\newcommand{\distP}{P}
\newcommand{\distQ}{Q}
\newcommand{\iid}{\stackrel{\mathrm{i.i.d.}}{\sim}}
\newcommand{\dx}{\mathrm{d}\boldx}
\newcommand{\dy}{\mathrm{d}\boldy}
\newcommand{\ypi}{{y_p^{(i)}}}
\newcommand{\xpi}{{\boldx_p^{(i)}}}
\newcommand{\yqj}{{y_q^{(j)}}}
\newcommand{\xqj}{{\boldx_q^{(j)}}}

\newcommand{\gxeta}{g(\boldx;\boldeta)}
\newcommand{\Zeta}{Z(\boldeta)}
\newcommand{\Zetahat}{\hat{Z}(\boldeta)}

\def\ratio{r}
\def\relratio{{\ratio}_{\alpha}}

\def\ci{\perp\!\!\!\perp} 
\newcommand\independent{\protect\mathpalette{\protect\independenT}{\perp}} 
\def\independenT#1#2{\mathrel{\rlap{$#1#2$}\mkern2mu{#1#2}}} 
\newcommand*\xor{\mathbin{\oplus}}

\newcommand{\vertiii}[1]{{\left\vert\kern-0.25ex\left\vert\kern-0.25ex\left\vert #1 
		\right\vert\kern-0.25ex\right\vert\kern-0.25ex\right\vert}}

\newcommand{\dataset}{{\cal D}}
\newcommand{\fracpartial}[2]{\frac{\partial #1}{\partial  #2}}
\newtheorem{lem}{Lemma}
\newtheorem{prop}{Proposition}
\newtheorem{them}{Theorem}
\newtheorem{assum}{Assumption}
\newtheorem{corol}{Corollary}
\newtheorem{mydefinition}{Definition}

\title{Estimating Posterior Ratio for Classification: \\ Transfer Learning from Probabilistic Perspective}
\author{ Song Liu\thanks{Institute of Statistical Mathematics, Tokyo, Japan}, \and Kenji Fukumizu\thanks{Institute of Statistical Mathematics, Tokyo, Japan} }
\date{}
\maketitle

\begin{abstract}
	Transfer learning assumes classifiers of similar tasks share certain parameter structures. 
	Unfortunately, modern classifiers uses sophisticated feature representations with huge parameter spaces which lead to costly transfer. 
	Under the impression that changes from one classifier to another should be ``simple'', an efficient transfer learning criteria that only learns the ``differences'' is proposed in this paper.
	We train a \emph{posterior ratio} which turns out to minimizes the upper-bound of the target learning risk.
	The model of posterior ratio does not have to share the same parameter space with the source classifier at all so it can be easily modelled and efficiently trained. The resulting classifier therefore is obtained by simply multiplying the existing probabilistic-classifier with the learned posterior ratio.	
\end{abstract}

\textbf{Keywords: }Transfer Learning, Domain Adaptation.
\section{Introduction}

Transfer learning \cite{TransferLearningSurvey,raina2006TransferLearning,regularizedMultitaskLearning} trains a classifier using limited number of samples with the help of abundant samples drawn from another similar distribution.
Specifically, we have a \emph{target task} providing a very small dataset $\dataset_P$ as well as a slightly different \emph{source task} with a large dataset $\dataset_Q$. 
The \emph{Transfer Learning} \cite{TransferLearningSurvey,raina2006TransferLearning,regularizedMultitaskLearning} usually refers to procedures that make use of the similarity between two learning tasks to build a superior classifier using both datasets. 
In this paper, we focus on probabilistic classification problems where the goal is to learn a class posterior $p(y|\boldx)$ over $\dataset_P$, where $p(y|\boldx)$ is the conditional probability of class labels given an input $\boldx$. 


Due to its complexity of parametrization, the predicting function is usually encoded in the hardware and executed with great efficiency, thus it is reasonable to look at a composite algorithm that consists of two parts: 
a fixed but fast build-in classifier offering complicated predicting pattern and a light-weight procedure works as an adapter that transfers the classifier for a variety of slightly different situations. 
For example, a general-purpose facial recognition built in a camera cannot change its predicting behavior once its model is trained, however the camera may learn transfer models and adjust itself for recognizing a target user. The challenge is, 
the transfer procedure is expected to response rapidly while learning over the entire feature set of the source classifier may slow us down dramatically. 


Intuitively, learning a transfer model does not necessarily need complicated features. 
Since the task is still facial recognition, we can assume that the changes from one classifier to another are simple and can be described by a trivial (say linear) model with a few key personal features (say hair-style or glasses). The general human facial modelling also plays an important role, however, we may safely assume that such modelling has been taken care of in the source classifier and remain unchanged in the target task. Thus, we can consider the ``incremental model'' only in the transfer procedure.
%

One of the popular assumptions in transfer learning is to ``reuse'' the model from the source classifier by training a target classifier and limiting the ``distance'' between it and the source classifier model. Regularization has been utilized to enforce the closeness between learned models \cite{regularizedMultitaskLearning}. More complicated structures, such as dependencies between task parameters are also used to construct a good classifier \cite{raina2006TransferLearning}. 
As most methods require to learn two classifiers of two tasks simultaneously, some works can take already trained classifiers as auxiliary models and learn to reuse their model structures \cite{yang2007cross,chattopadhyay2012multisource,duan2009domain}. 

However, reusing the existing model means we need to bring the entire feature set from the source task and include them in the target classifier during transfer learning, even if we know that a vast majority of them does not contribute to the transition from the source to the target classifier. Such an overly expressive model can be harmful given limited samples in $\dataset_P$. Moreover, the hyper-parameters used for constructing features may also be difficult to tune since the cross-validation may be poor on such a small dataset $\dataset_P$. Finally, obtaining those features in some applications may be time-consuming. 

Another natural idea of transfer learning is to ``borrow'' informative samples from the $\dataset_Q$, and get rid of harmful samples. TrAdaBoost \cite{dai2007boosting} follows this exact learning strategy to assign weights to samples from both $\dataset_P$ and $\dataset_Q$. By assigning high weights to samples contributes to the performance in the target task, and penalizing samples that ``misleads'' the classifier, TrAdaBoost reuses the knowledges from both datasets to construct an accurate classifier on the target task. The idea of importance sampling also gives rise to another set of methods learning weights of samples by using \emph{density ratio estimation} \cite{Covariate_Shift,kanamori2009leastsquare,logistictransfer}. Using unlabelled samples from both datasets, an importance weighting function can be learned. By plugging such function into the empirical risk minimization criterion \cite{Vapnik1998}, we can use samples from the $\dataset_Q$ ``as if'' they were samples from $\dataset_P$.
However, such method can not allow ``incremental modelling'' as well, since it learns a full classifier model during the transfer.  

It can be noticed that if one can directly model and learn the ``difference'' between target and source classifier, one may  use only the incremental features which leads to a much more efficient learning criteria. 

The first contribution of this paper is showing that such ``difference learning'' is in fact the learning of a \emph{posterior ratio} which is the ratio between the posteriors from source and target tasks. 
We show learning such posterior ratio is equivalent to minimizing the upper-bound of the classification error of the target task. Second, an efficient convex optimization algorithm is given to learn the parameters of the posterior ratio model and is proved to give consistent estimates under mild assumptions. Finally, the usefulness of this method is validated over various artificial and real-world datasets. 

However, we do not claim that the proposed method has superior performance against all existing works based on extra assumptions, e.g. the smoothness of the predicting function over unlabeled target samples\cite{duan2009domain,chattopadhyay2012multisource}. 
The proposed method is simply a novel \emph{probabilistic} framework working on a very small set of assumptions and offers the flexibility of modelling to transfer learning problems. It is fully expendable to various problem settings once new assumptions are made. 

\section{Problem Setting}
\label{sec.prob.form}
Consider two sets of samples drawn independently from two probability distributions $Q$ and $P$ on $\{-1,1\} \times  \mathbb{R}^{d}$: 
\begin{align*}
\dataset_Q& = \left\{\left(\yqj,\xqj \right)\right \}_{j=1}^{n'} \iid Q,\\
\dataset_P &= \left \{\left(\ypi,\xpi\right)\right \}_{i=1}^n \iid P
\end{align*}
$\dataset_Q$ and $\dataset_P$ are source and target dataset respectively.
We denote $p(y|\boldx)$ and $q(y|\boldx)$ as the class posteriors in $P$ and $Q$ respectively. Moreover, $n\ll n'$.


Our target is to obtain an estimate of the class posterior $\hat{p}(y|\boldx)$ and predict the class label of an input $\boldx$ by 
$
\hat{y} = \argmax_{y \in \{-1,1\}} \hat{p}(y|\boldx).
$

Clearly, if $n$ is large enough, one may apply logistic regression \cite{logistic_regression,zhu2001kernel} to obtain a good estimate. 
In this paper, we focus on a scenario where $n$ is relatively small and $n'$ is sufficiently large. Thus, it is desirable if we can \emph{transfer} information from the source task to boost the performance of our target classifier.


\section{Composite Modeling}
\label{sec.lazy.trans.modelling}
Note that the posterior $p(y|\boldx)$ can be decomposed into 
\begin{align*}
p(y|\boldx) = \frac{p(y|\boldx)}{q(y|\boldx)} \cdot q(y|\boldx),
\end{align*}
where $\frac{p(y|\boldx)}{q(y|\boldx)}$ is the \emph{class posterior ratio}, and the $q(y|\boldx)$ is a \emph{source classifier}.

This decomposition leads to a simple transfer learning methodology: Model and learn the posterior ratio and general-purpose classifier \emph{separately}, then later multiply them together as an estimate of the posterior . 

The main interest of this paper is learning such composite model using samples from $\dataset_P$ and $\dataset_Q$. Now, we introduce two parametric models $g(y, \boldx; \boldtheta)$ (or $g_\boldtheta$ for short) and $q(y, \boldx; \boldbeta)$ (or $q_\boldbeta$ for short) for $\frac{p(y|\boldx)}{q(y|\boldx)}$ and $q(y|\boldx)$ respectively. 
%
%
%
\subsection{Kullback-Leibler Divergence Minimization}
\label{sec.trasnfer.bound}
A natural way of learning such a model is to minimize the Kullback-leibler (KL) \cite{Annals-Math-Stat:Kullback+Leibler:1951} divergence between the true posterior and our composite model. 
\begin{mydefinition}[Conditional KL Divergence]
	\begin{align*}
	\mathrm{KL}\left[p\|q\right] = P\log \frac{p(y|\boldx)}{q(y|\boldx)} ,
	\end{align*}
\end{mydefinition}
We denote $P f$ as the short hand of the integral/sum of a function $f$ over a probability distribution $P$ on its domain.

Now, we proceed to obtain the following upper-bound of KL divergence from $p$ to the composite model:
%

\begin{prop}[Transfer Learning Upper-bound]
	if $
		\frac{p(y,\boldx)}{q(y,\boldx)}\le C_\mathrm{max} <\infty
$
	and $0<q_\boldbeta < 1$, then the following inequality holds
	\begin{align}
	\label{eq.TI}
	\mathrm{KL}\left[p\|g_\boldtheta \cdot q_\boldbeta \right] \le \mathrm{KL}\left[p\|g_\boldtheta q\right] + C_\mathrm{max}\mathrm{KL}\left[q\|q_\boldbeta\right] + C', 
	\end{align}
	where $C'$ is a constant that is irrelevant to $\boldtheta$ or $\boldbeta$. 
\end{prop}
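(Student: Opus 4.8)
The plan is to exploit the multiplicative structure of the composite model, so that the factor $g_\boldtheta$ cancels the moment I introduce the \emph{true} source conditional $q$. Starting from the definition of conditional KL divergence and using $\log\frac{p(y|\boldx)}{g_\boldtheta q_\boldbeta} = \log\frac{p(y|\boldx)}{g_\boldtheta q} + \log\frac{q}{q_\boldbeta}$, I obtain the exact identity
\begin{align*}
\mathrm{KL}\left[p\|g_\boldtheta q_\boldbeta\right] = \mathrm{KL}\left[p\|g_\boldtheta q\right] + P\log\frac{q}{q_\boldbeta}.
\end{align*}
The first term on the right is already the one appearing in the claim, so everything reduces to controlling the residual expectation $P\log\frac{q}{q_\boldbeta}$, which measures the mismatch between the learned source classifier $q_\boldbeta$ and the truth $q$, but averaged under the \emph{target} distribution $P$ rather than $Q$.

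Next I would split the residual as $P\log\frac{q}{q_\boldbeta} = P\log q - P\log q_\boldbeta$. The term $P\log q$ involves only the true distributions and is therefore constant in $\boldtheta$ and $\boldbeta$, so it can be deposited into $C'$. The remaining term $-P\log q_\boldbeta$ is where both hypotheses enter. Because $0<q_\boldbeta<1$, the integrand $-\log q_\boldbeta$ is pointwise nonnegative, which is exactly the leverage needed to apply the density-ratio bound: since $p(y,\boldx)\le C_\mathrm{max}\,q(y,\boldx)$ and we are integrating a nonnegative function, I may replace the target measure by $C_\mathrm{max}$ times the source measure,
\begin{align*}
-P\log q_\boldbeta \le -C_\mathrm{max}\,Q\log q_\boldbeta = C_\mathrm{max}\,\mathrm{KL}\left[q\|q_\boldbeta\right] - C_\mathrm{max}\,Q\log q,
\end{align*}
where the final equality merely adds and subtracts $C_\mathrm{max}\,Q\log q$ to reconstitute the KL term, and $-C_\mathrm{max}\,Q\log q$ is again constant in $\boldtheta,\boldbeta$.

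Collecting the pieces yields the inequality with $C' = P\log q - C_\mathrm{max}\,Q\log q$. The step requiring genuine care — and the reason both assumptions are stated — is the sign issue in the residual. A direct attempt to bound $P\log\frac{q}{q_\boldbeta}$ by $C_\mathrm{max}\,\mathrm{KL}\left[q\|q_\boldbeta\right]$ fails, because $\log\frac{q}{q_\boldbeta}$ is not sign-definite and the inequality $p\le C_\mathrm{max}q$ can only be pushed through an integral when the integrand is nonnegative. Peeling off the constant $P\log q$ first is precisely what isolates the nonnegative piece $-\log q_\boldbeta$, whose nonnegativity is guaranteed by $q_\boldbeta<1$; after that the density-ratio bound applies cleanly. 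I would also verify that $q_\boldbeta>0$ keeps every logarithm finite, so that all the manipulations above are well defined.
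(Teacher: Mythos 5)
Your proof is correct and follows essentially the same route as the paper: the same decomposition $p/(g_\boldtheta q_\boldbeta) = \bigl(p/(g_\boldtheta q)\bigr)\cdot(q/q_\boldbeta)$, the same key step of isolating the nonnegative integrand $-\log q_\boldbeta$ (using $0<q_\boldbeta<1$) before applying $p(y,\boldx)\le C_\mathrm{max}\,q(y,\boldx)$, and the identical constant $C' = P\log q - C_\mathrm{max}\,Q\log q$. Your explicit remark on why the density-ratio bound cannot be applied to the sign-indefinite quantity $\log\frac{q}{q_\boldbeta}$ directly is a point the paper leaves implicit, but the argument itself is the same.
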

\begin{proof}
	\begin{align}
	&\text{KL}\left[p\|g_\boldtheta \cdot h_\boldbeta\right] = \text{KL}\left[p \| g_\boldtheta \cdot q \cdot \frac{q_\boldbeta}{q}\right] \notag\\
	=& \text{KL}\left[p\|g_\boldtheta \cdot q \right]  - P \log q_\boldbeta + P \log q \notag\\
	=& \text{KL}\left[p\|g_\boldtheta \cdot q \right] -  \int q(y,\boldx) \frac{p(y,\boldx)}{q(y,\boldx)}\log q_\boldbeta \;dy\boldx + P \log q \notag\\
	\le& \text{KL}\left[p\|g_\boldtheta \cdot q \right] + C_\mathrm{max} Q\log q-  C_\mathrm{max} Q\log q_\boldbeta + C' \label{eq.TI.1}\\
	=& \text{KL}\left[p\|g_\boldtheta \cdot q \right] + C_\mathrm{max}\text{KL}\left[q\|q_\boldbeta\right] + C' , \notag
	\end{align}
	where $C' = P\log q - C_\mathrm{max} Q\log q $.
	Further, 
	\begin{align}
	&\text{KL}\left[p\|g_\boldtheta \cdot q \right] + C_\mathrm{max} \text{KL}\left[q\|q_\boldbeta\right] \notag\\
	\approx& - \frac{1}{n} \sum_{i=1}^{n} \log g\left(\ypi,\xpi;\boldtheta\right) \notag\\
	&- C_\mathrm{max} \frac{1}{n'} \sum_{j=1}^{n'} \log q\left(\yqj,\xqj;\boldbeta\right) + C''\label{eq.TI.app} 
	\end{align}
	where $C''$ is a constant that is irrelevant to $\boldtheta$ or $\boldbeta$.
\end{proof}
We may minimize the empirical upper-bound \eqref{eq.TI.app} of KL divergence in order to obtain estimates of $\boldtheta$ and $\boldbeta$. $C_\mathrm{max}$ is an unknown constant introduced in \eqref{eq.TI.1} that illustrates the how dissimilar these two tasks are.  Such upper-bound in \eqref{eq.TI} formalizes the common intuition that ``if two tasks are similar, transfer learning should be easy, '' since the more similar two tasks are, the smaller the $C_\mathrm{max}$ is, and the tighter the bound is.

Note that the minimizing \eqref{eq.TI.app} leads to two \emph{separate} maximum likelihood estimation (MLE). The MLE of the second likelihood term of bound \eqref{eq.TI.app}
\begin{align*}
\hat{\boldbeta} = \argmax_{\boldbeta}\frac{1}{n'} \sum_{j=1}^{n'} \log q\left(\yqj,\xqj;\boldbeta\right)
\end{align*}
leads to a conventional MLE of a posterior model, and has been well studied. $q$ can be efficiently modeled and trained using techniques such as logistic regression \cite{logistic_regression,zhu2001kernel}. Here we consider it is already given.  However, maximizing the first likelihood term, a posterior ratio 
\begin{align}
\label{eq.ratio.optimization}
\hat{\boldtheta} = \argmax_{\boldtheta}\frac{1}{n} \sum_{i=1}^{n} \log g\left(\ypi,\xpi;\boldtheta\right)
\end{align}
is our main focus. In the next section, we show the modelling and learning of the posterior ratio is feasible and computationally efficient. 
\subsection{Posterior Ratio Model}
\label{sec.model}
Although it is not necessary, to illustrate the idea behind the posterior ratio modelling, we assume $p(y|\boldx)$ and $q(y|\boldx)$ belongs to the exponential family, e.g. $p(y|\boldx)$ can be parametrized as:
\begin{align}
\label{eq.pq.parametrized}
	p(y|\boldx; \boldbeta) \propto  \exp\left(y\cdot \sum_{i=1}^{m} \beta_i h_i(\boldx))\right),
\end{align}
Given the parametrization model \eqref{eq.pq.parametrized}, consider the ratio between $p$ and $q$:
\begin{align*}
	\frac{p(y|\boldx;\beta_p)}{q(y|\boldx;\beta_q)} \propto \exp\left(y\sum_{i=1}^m (\beta_{p,i}-\beta_{q,i}) h_i(\boldx)\right).
\end{align*}
For all $\beta_{p,i}-\beta_{q,i} = 0$, factor feature $f_i $ is nullified, and therefore can be ignored when modelling the ratio. In fact, once the ratio is considered, the separate $\boldbeta_{p}$ and $\boldbeta_{q}$ does not have to be learned, but only their difference $\theta_i = \beta_{p,i}-\beta_{q,i}$ is sufficient to describe the transition from $p$ to $q$. Thus, we write our posterior ratio model as 
\begin{align}
\label{eq.model}
	r(y,\boldx; \boldtheta) = \frac{1}{N(\boldx;\boldtheta)} \exp \left( y \sum_{i \in S} \theta_i h_i(\boldx)\right), 
\end{align}
where $S = \{i | \beta_{p, i} - \beta_{q, i} \neq 0\}$ and $N(\boldx;\boldtheta)$ is the normalization term defined as
\begin{align*}
N(\boldx;\boldtheta) = \sum_{y\in \{-1,1\}} q(y|\boldx) \exp\left( y \sum_{i \in S} \theta_i h_i(\boldx) \right).
\end{align*}
Such normalization is due to the fact that we are minimizing the KL divergence between $p(y|\boldx)$ and $g(y,\boldx;\boldtheta) q(y|\boldx)$, we need to make sure that $g(y,\boldx;\boldtheta) q(y|\boldx)$ is a valid conditional probability, i.e., :
$\sum_y q(y|\boldx) ~g_\boldtheta(y,\boldx;\boldtheta) = 1.$ 

This modelling technique gives us great flexibilities since it only concerns the ``effective features'' $\{h_i\}_{i \in S}$ rather than the entire feature set $\{h_1, h_2, \dots, h_m\}$. In this paper, we assume the transfer should be simple, thus the potential feature set only contains ``simple features'', such as linear ones: $h_i(\boldx) = x_i, i\in S$. 

From now on, we simplify $ y \sum_{i \in S} \theta_i h_i(\boldx)$ using a linear representation $\boldtheta^\top \boldf(y,\boldx)$, where \[\boldf(y,\boldx) = [yh_{a_1}(\boldx), yh_{a_2}(\boldx), \dots, yh_{a_{m'}}(\boldx)],\] where $a_1,a_2,\dots, a_{m'} \in S$.


However, this modelling also causes a problem: We cannot directly evaluate the output value of this model, since we do not have access to the true posterior $q(y|\boldx)$. Therefore, we can only use samples from $\dataset_Q$ to approximate the normalization term.

\section{Estimating Posterior Ratio}
\label{sec.classposterior}
Now we introduce the estimator of the class-posterior ratio $p(y|\boldx)/{q(y|\boldx)}$. 
 Let us substitute the model of \eqref{eq.model} into the objective \eqref{eq.ratio.optimization}:
\begin{align*}
\hat{\boldtheta}=\argmax_{\boldtheta} &\frac{1}{n}\sum_{i=1}^{n} \log g\left(\ypi, \xpi ;\boldtheta\right) \\
=& \frac{1}{n}\sum_{i=1}^{n} \boldtheta^\top \boldf\left(\ypi, \xpi\right) - \frac{1}{n}\sum_{i=1}^{n}\log N\left(\boldtheta, \xpi\right).
\end{align*}
The normalization term needs to be evaluated in a pointwise fashion $N\left(\boldtheta, \boldx_p^{(i)}\right), \forall \boldx_p^{(i)} \in \dataset_P$. Note that if we have sufficient observations $(y_q, \boldx)$ paired with each $\boldx_p^{(i)}$, i.e. $\left\{ \left( \yqj, \boldx\right)\right\}_{j=1}^k \sim Q, \boldx = \boldx_p^{(i)}$, such normalization can be approximated efficiently via \emph{sample average}:
\[
N(\boldtheta, \boldx) \approx \frac{1}{k}\sum_{j=1}^{k} \exp\left(\boldtheta^\top \boldf\left(\yqj, \boldx\right)\right).
\]
\begin{figure}[t]
	\begin{center}
		\includegraphics[width=0.4\textwidth]{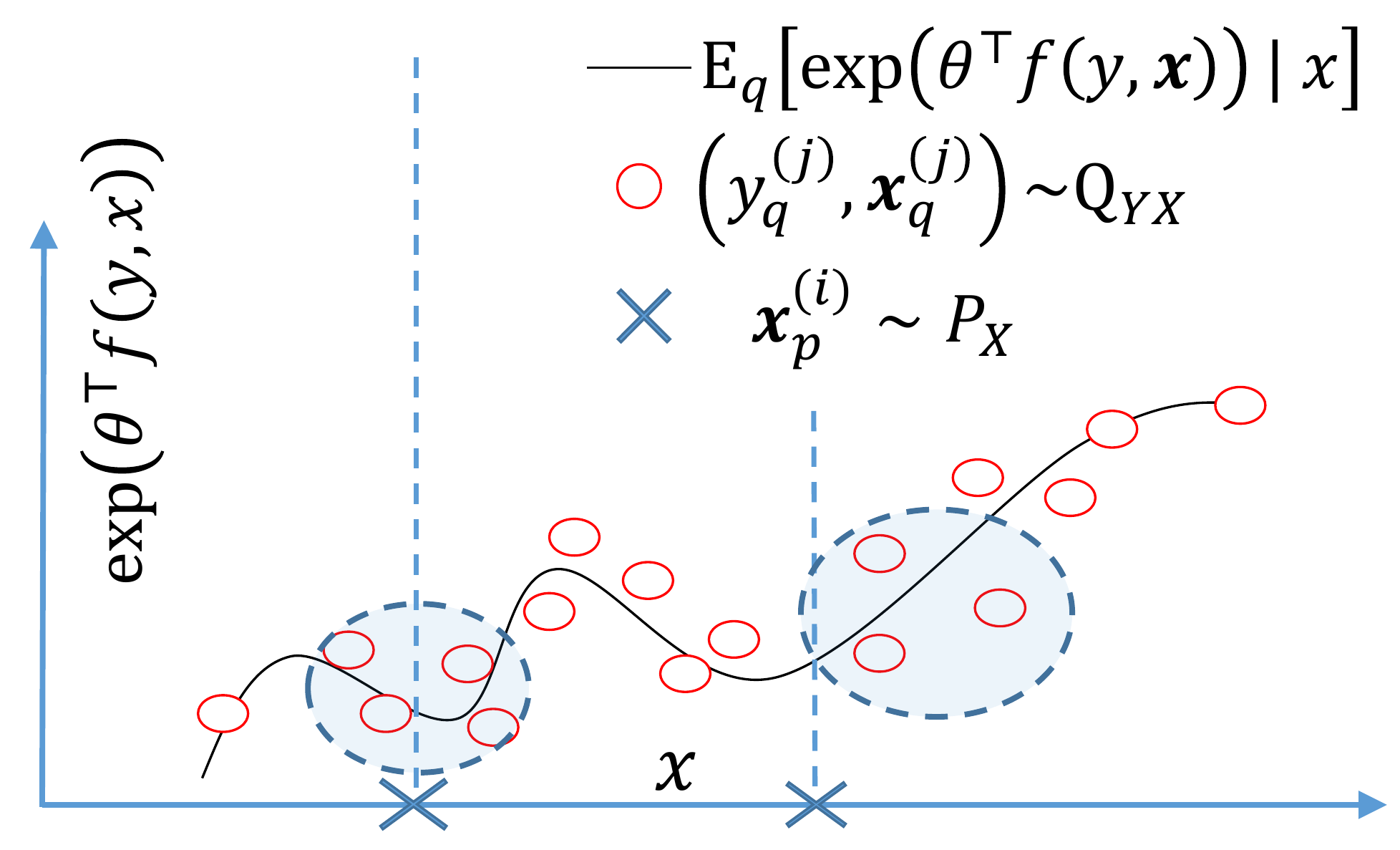}
	\end{center}
	\caption{Approximate $N(\boldtheta,\boldx)$ using nearest neighbours.}
	\label{fig.illus.nei}
\end{figure}
However, in practice not many observed samples may be paired with $\boldx_p^{(i)}$. Especially when $\boldx$ is in a continuous domain, we may not observe any paired sample at all. We may consider using the neighbouring pairs $\left( \yqj, \boldx_q^{(j)} \right)$ where $\boldx_q^{(j)}$ is a neighbour of $\boldx_p^{(i)}$ to approximate $N(\boldtheta, \boldx_p^{(i)})$, which naturally leads to the idea of $k$-nearest neighbours ($k$-NN) estimation of such quantity (see Figure \ref{fig.illus.nei}):
\begin{align*}
N(\boldtheta, \boldx_p^{(i)}) &\approx \widehat{N}_{n',k}\left(\boldtheta; \boldx_p^{(i)}\right) \\
&=\frac{1}{k}\sum_{j\in \mathcal{N}_{n'} \left(\boldx_p^{(i)}, k\right)} \exp\left(\boldtheta^\top\boldf\left(\yqj, \boldx_q^{(j)}\right)\right),	
\end{align*}
where 
\[
\mathcal{N}_{n'}(\boldx^{(i)}_p, k) = \left\{ j \biggl| \boldx^{(j)}_q \text{is one of the }k\text{-NNs of } \boldx^{(i)}_p\right\}.
\]
Now we have a ``computable'' approximation to the posterior ratio model:
\begin{align*}
	 g_{n'}(y,\boldx;\boldtheta) = \frac{\exp\left( \boldtheta^\top \boldf(y, \boldx)\right)}{N_{n',k}(\boldx;\boldtheta)}.
\end{align*}
The resulting optimization is 
\begin{align}
\label{eq.obj}
\hat{\boldtheta} = \argmin_{\boldtheta} ~~ &\ell(\boldtheta; \dataset_P, \dataset_Q) = -\frac{1}{n}\sum_{i=1}^{n} \boldtheta^\top \boldf\left(\ypi, \boldx_p^{(i)} \right) \notag\\
+ \frac{1}{n}\sum_{i=1}^{n}\log \frac{1}{k}&\sum_{j\in \mathcal{N}_q \left(\boldx_p^{(i)}, k\right)} \exp\left(\boldtheta^\top\boldf\left(\yqj, \boldx_q^{(j)}\right)\right),
\end{align}
which is convex. Note $\ell$ represents the \emph{negative} likelihood.

Moreover, if we assume that the changes between two posteriors are ``mild'', i.e. $\|\boldtheta_{i}\| = \|\boldbeta_{p,i} - \boldbeta_{q,i}\|$ is small, we may use an extra $\ell_2$ regularization to restrict the magnitude of our model parameter $\boldtheta$:
\begin{align}
\label{eq.main.obj}
\argmin_{\boldtheta} \ell(\boldtheta) + \lambda \|\boldtheta\|_2,
\end{align}
where the $\lambda$ is a regularization term and can be chosen via likelihood cross-validation in practice.	Finally the gradient of $\ell(\boldtheta)$ is given as
\begin{align*}
	\nabla_\boldtheta \ell(\boldtheta) = &-\frac{1}{n}\sum_{i=1}^{n} \boldf\left(\ypi, \boldx_p^{(i)} \right) \\
	&+ \frac{1}{n}\sum_{i=1}^{n} \mathbb{E}_{n'}\left[g_{n'}(y,\boldx;\boldtheta)\boldf(y,\boldx) \biggl| \boldx = \boldx^{(i)}\right],
\end{align*}
where $\mathbb{E}_{n'}\left[Z | \boldx = \boldx^{(i)}\right]$ is the empirical $k$-NN estimate of a conditional expectation over $Q$: 
\begin{align*}
	\mathbb{E}_{n'}\left[Z \biggl| \boldx = \boldx^{(i)}\right] =\frac{1}{k}\sum_{j\in \mathcal{N}_{n'} \left(\boldx^{(i)}, k\right)} Z_j.
\end{align*}
The computation of this gradient is straightforward, and thus we can use any gradient-based method such as quasi-newton to solve the unconstrained convex optimization in \eqref{eq.main.obj}.

It can be noticed that such algorithm is similar to the density ratio estimation method, KLIEP \cite{Covariate_Shift_jour}. Indeed, they are all estimators of learning a ratio function between two probabilities based on maximum-likelihood criteria. However, the proposed method is different from \cite{Covariate_Shift_jour} in terms of modelling, motivation and usage. 

\section{Consistency of the Estimator}
\label{sec.proof}
In this section, we analyze the consistency of the estimator given in \eqref{eq.obj}, i.e. whether the estimated parameter converges to the solution of the population objective function. This result is \emph{not} straightforward since we used an extra $k$-NN approximation in our model so that the model itself is an ``estimate''. The question is, does this approximation lead to a consistent estimator? 

First, we define the estimated and true parameter as:
\begin{align*}
\hat{\boldtheta} &= \argmax_{\boldtheta} \ell(\hat{\boldtheta}; \dataset_P, \dataset_Q)  = P_n~~ \log g_{n'}(y,\boldx; \boldtheta)\\ \boldtheta^* &= \argmax_{\boldtheta} P~~ \log g(y,\boldx; \boldtheta),
\end{align*}
where $P_n$ is the empirical measure of distribution $P$.
\begin{assum}[Bounded Ratio Model]
	\label{ass.bounded.model}
	There exists $1<M_\mathrm{max}<\infty$, so that $\left|\boldtheta^\top \boldf(y,\boldx)\right| \le \log(M_\mathrm{max})$. Moreover, $\boldtheta$ is in a totally bounded metric space and $\max_{y, \boldx}\|\boldf(y, \boldx)\|_2 \le F_\mathrm{max}$ where $0<F_\mathrm{max}<\infty$.
\end{assum}
Therefore $\exp\left(\boldtheta^\top \boldf(y, \boldx)\right), N(\boldx;\boldtheta)$ and $N_{n',k}(\boldx;\boldtheta) \in \left[\frac{1}{M_\text{max}}, M_\text{max} \right]$, and the posterior ratio model is always bounded by constants. It is a reasonable assumption as the posterior ratio measures the ``differences'' between two tasks, the true posterior ratio must be close to one if two tasks are similar.  
\begin{assum}[Bounded Covariate Shift]
	$\frac{p(\boldx)}{q(\boldx)} \le R_\mathrm{max}$.
\end{assum}
The support between $P$ and $Q$ must overlap. If samples in $\dataset_P$ distribute completely differently from those in $\dataset_Q$, it does not make sense to expect the transfer learning method would work well.  
\begin{assum}[Identifiability]
	\label{assum.identi}
	$\boldtheta^*$ is the unique global maximizer of the population objective function $P \log g(y,\boldx;\boldtheta)$, i.e. for all $\epsilon >0$,
	\[
	\sup_{\boldtheta, \|\boldtheta-\boldtheta^*\|\ge \epsilon} P \log g(y,\boldx;\boldtheta) < P \log g(y,\boldx;\boldtheta^*).
	\]
\end{assum}
Then we have the following theorem that states our posterior ratio estimator is \emph{consistent}. 
\begin{mytheorem}
	\label{thm.my.only.theorem}
	Suppose for each $\boldx$, the random variable $\|X-\boldx\|$ is absolutely continuous. If $n\rightarrow \infty$, $n'\rightarrow \infty$, $k_{n'} / \log n' \rightarrow \infty $ and $k_{n'} / n' \rightarrow 0 $, where $k_{n'}$ is the sample dependent version of $k$, the number of nearest neighbors used in $k$-NN approximation. Then under above assumptions, $\hat{\boldtheta} \overset{p}{\to} \boldtheta^*$. Further $	\ell(\hat{\boldtheta}; \dataset_P, \dataset_Q) \overset{p}{\to} \mathrm{KL}\left[p\|q\right]$.
\end{mytheorem}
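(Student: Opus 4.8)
The plan is to treat $\hat{\boldtheta}$ as an M-estimator and invoke the standard argmax consistency argument: if the empirical criterion converges uniformly to a population criterion whose maximizer is well separated, then the empirical maximizer converges in probability to the population one. Writing $M_n(\boldtheta) = P_n \log g_{n'}(y,\boldx;\boldtheta)$ and $M(\boldtheta) = P\log g(y,\boldx;\boldtheta)$, Assumption \ref{assum.identi} already supplies the well-separated maximizer, so the entire task reduces to establishing
\[
\sup_{\boldtheta}\left|M_n(\boldtheta) - M(\boldtheta)\right| \overset{p}{\to} 0.
\]
Once this holds, a routine application of the argmax theorem delivers $\hat{\boldtheta}\overset{p}{\to}\boldtheta^*$.

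To control the uniform gap I would split it along the two sources of error,
\[
M_n(\boldtheta) - M(\boldtheta) = \underbrace{P_n\left[\log g_{n'}(y,\boldx;\boldtheta) - \log g(y,\boldx;\boldtheta)\right]}_{(A)} + \underbrace{\left(P_n - P\right)\log g(y,\boldx;\boldtheta)}_{(B)}.
\]
Term $(B)$ is a standard empirical-process term. By Assumption \ref{ass.bounded.model} the log-likelihood $\log g(y,\boldx;\boldtheta)$ is uniformly bounded and, because $\|\boldf\|_2 \le F_\mathrm{max}$, Lipschitz in $\boldtheta$; combined with $\boldtheta$ lying in a totally bounded metric space this makes $\{\log g(\cdot;\boldtheta)\}$ a Glivenko--Cantelli class, so $\sup_{\boldtheta}|(B)| \to 0$ as $n\to\infty$ by a uniform law of large numbers (a finite $\epsilon$-net over $\boldtheta$ plus a Hoeffding bound suffices).

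The heart of the argument --- and the step I expect to be the main obstacle --- is the uniform control of term $(A)$, which isolates the error introduced by the $k$-NN approximation of the normalizer. Since $\log g_{n'} - \log g = \log N(\boldx;\boldtheta) - \log N_{n',k}(\boldx;\boldtheta)$ and both normalizers lie in $[1/M_\mathrm{max}, M_\mathrm{max}]$ by Assumption \ref{ass.bounded.model}, $\log$ is Lipschitz there and it suffices to show $\sup_{\boldtheta} P_n\left|N_{n',k}(\boldx;\boldtheta) - N(\boldx;\boldtheta)\right| \to 0$. The key observation is that, for fixed $\boldtheta$, $N(\boldx;\boldtheta) = \mathbb{E}_Q[\exp(\boldtheta^\top\boldf(y,\boldx))\mid X=\boldx]$ is exactly the regression function of the bounded response $\exp(\boldtheta^\top\boldf(y_q,\boldx_q))$ on $\boldx_q$ under $Q$, and $N_{n',k}$ is precisely its $k$-nearest-neighbour regression estimate evaluated at the target points $\boldx_p^{(i)}$. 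I would therefore invoke the strong uniform consistency theory for $k$-NN regression: under $k_{n'}\to\infty$, $k_{n'}/n'\to 0$ and $k_{n'}/\log n'\to\infty$, with the absolute-continuity hypothesis on $\|X-\boldx\|$ ruling out distance ties, $k$-NN regression of a bounded response is uniformly consistent over the support of $Q$. The bounded-covariate-shift assumption $p(\boldx)/q(\boldx)\le R_\mathrm{max}$ guarantees that the evaluation points $\boldx_p^{(i)}$ fall inside the support of $Q$, so the estimate is consistent exactly where it is queried. Uniformity over $\boldtheta$ is then obtained by a covering argument: the map $\boldtheta\mapsto\exp(\boldtheta^\top\boldf)$ is Lipschitz with a uniform constant on the bounded parameter space, so controlling $(A)$ on a finite $\epsilon$-net and paying a Lipschitz price off the net yields $\sup_{\boldtheta}|(A)|\overset{p}{\to}0$.

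Combining the two bounds gives the required uniform convergence and hence $\hat{\boldtheta}\overset{p}{\to}\boldtheta^*$. For the final claim about the objective value, I would note that under correct specification the population maximizer reproduces the true posterior ratio, so $M(\boldtheta^*) = P\log\frac{p(y|\boldx)}{q(y|\boldx)} = \mathrm{KL}[p\|q]$ by the definition of conditional KL divergence. Since $\ell(\hat{\boldtheta};\dataset_P,\dataset_Q) = M_n(\hat{\boldtheta})$, the already-established uniform convergence gives $\left|M_n(\hat{\boldtheta}) - M(\hat{\boldtheta})\right|\le \sup_{\boldtheta}|M_n - M|\to 0$, while continuity of $M$ together with $\hat{\boldtheta}\overset{p}{\to}\boldtheta^*$ gives $M(\hat{\boldtheta})\overset{p}{\to}M(\boldtheta^*)$; chaining these yields $\ell(\hat{\boldtheta};\dataset_P,\dataset_Q)\overset{p}{\to}\mathrm{KL}[p\|q]$.
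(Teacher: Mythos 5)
Your overall architecture is the same as the paper's: an argmax/M-estimation consistency argument resting on Assumption \ref{assum.identi} plus a uniform-convergence lemma, with the uniform-in-$\boldtheta$ control obtained from Lipschitz continuity of $\boldtheta \mapsto \exp(\boldtheta^\top\boldf)$ on the totally bounded parameter space (the paper packages exactly this as stochastic equicontinuity and Andrews' generic ULLN), and the final KL claim obtained by chaining $\ell(\hat{\boldtheta}) \to M(\hat{\boldtheta}) \to M(\boldtheta^*) = \mathrm{KL}[p\|q]$ (your explicit appeal to correct specification here is something the paper leaves implicit). The difference, and the one genuine gap, is in how you control the $k$-NN term $(A)$: you invoke \emph{sup-norm} consistency of $k$-NN regression, uniformly over the support of $Q$. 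That is not available under the theorem's hypotheses. The conditions $k_{n'}/\log n' \to \infty$, $k_{n'}/n' \to 0$ together with absolute continuity of $\|X-\boldx\|$ are precisely the hypotheses of the \emph{universal} consistency theorem for $k$-NN regression (Gy\"orfi et al., Theorems 23.7--23.8, which the paper quotes), and that result gives convergence of the error \emph{integrated against the design measure} of $Q$, i.e.\ in $L_2(\mu)$, not pointwise-uniform convergence. Sup-norm consistency of $k$-NN requires extra structure never assumed here: continuity of the regression function $N(\cdot;\boldtheta)$, a design density bounded away from zero, and compactness of the support. Relatedly, you use $p(\boldx)/q(\boldx)\le R_\mathrm{max}$ only to say the query points $\boldx_p^{(i)}$ lie in the support of $Q$; but with only an integrated ($L_1(Q)$) error bound, that is not enough --- $P$ could concentrate exactly on the region where the $Q$-integrated-small $k$-NN error happens to be large. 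The bounded covariate shift is needed for a change of measure, not for support inclusion.

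The repair brings you back to the paper's proof: split $P_n\left|N_{n'}-N\right| \le (P_n-P)\left|N_{n'}-N\right| + P\left|N_{n'}-N\right|$. The first piece is an empirical-process term over the $\dataset_P$ sample; conditionally on $\dataset_Q$ it is a bounded, Lipschitz-in-$\boldtheta$ class, so your $\epsilon$-net argument applies. For the second piece, change measure using the covariate-shift bound, $P\left|N_{n'}-N\right| \le R_\mathrm{max}\, Q\left|N_{n'}-N\right|$, and then the $L_2(\mu)$ universal-consistency theorem plus Jensen's inequality gives $Q\left|N_{n'}-N\right| \to 0$ pointwise in $\boldtheta$; uniformity in $\boldtheta$ again follows from your covering argument (equivalently, Lipschitz continuity $\Rightarrow$ stochastic equicontinuity $\Rightarrow$ generic ULLN, as in the paper). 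With this substitution your proposal is correct and coincides with the paper's argument.
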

The proof relies on the following lemma:
\begin{mylemma}
	\label{lem.convergence}
	Under all assumptions stated above, if $n\rightarrow \infty$, $n'\rightarrow \infty$, $k_{n'} / \log n' \rightarrow \infty $ and $k_{n'} / n' \rightarrow 0 $. Then $\sup_\boldtheta \left|P_n~~ \log g_{n'}(y,\boldx; \boldtheta) - P~~ \log g(y,\boldx;\boldtheta) \right| \overset{p}{\to} 0$, i.e. the error caused by approximating objective using samples converges to 0 in probability \textbf{uniformly} w.r.t. $\boldtheta$.
\end{mylemma}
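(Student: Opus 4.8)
The plan is to bound the quantity in the lemma by a triangle inequality that separates the two distinct sources of error: replacing the population measure $P$ by its empirical counterpart $P_n$, and replacing the true normalization $N(\boldx;\boldtheta)$ by its $k$-NN surrogate $N_{n',k}(\boldx;\boldtheta)$. Concretely, I would write
\[
\sup_\boldtheta \left| P_n \log g_{n'} - P \log g \right| \le \sup_\boldtheta \left| P_n \log g_{n'} - P_n \log g \right| + \sup_\boldtheta \left| P_n \log g - P \log g \right|,
\]
and show each term vanishes in probability separately. The first term isolates the $k$-NN approximation error — the numerators of $g_{n'}$ and $g$ coincide, so $\log g_{n'} - \log g = \log N - \log N_{n',k}$ — while the second is a classical uniform law of large numbers for the fixed, correctly normalized model $\log g$.

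For the second term I would invoke a uniform law of large numbers (Glivenko--Cantelli). By Assumption~\ref{ass.bounded.model} the integrand $\log g(y,\boldx;\boldtheta)$ is bounded, since $\exp(\boldtheta^\top\boldf)$ and $N(\boldx;\boldtheta)$ both lie in $[1/M_\mathrm{max}, M_\mathrm{max}]$ so $\log g \in [-2\log M_\mathrm{max}, 2\log M_\mathrm{max}]$, and it is Lipschitz in $\boldtheta$ with constant $\le 2F_\mathrm{max}$, because $\nabla_\boldtheta \log g$ is a difference of $\boldf$ and a conditional average of $\boldf$, each of norm at most $F_\mathrm{max}$. Together with the total boundedness of the parameter space, the function class $\{\log g(\cdot,\cdot;\boldtheta)\}_\boldtheta$ is Glivenko--Cantelli, so this term is $o_p(1)$.

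For the first term I would use that both normalizers lie in $[1/M_\mathrm{max}, M_\mathrm{max}]$, where $\log$ is $M_\mathrm{max}$-Lipschitz, giving
\[
\sup_\boldtheta \left| P_n \log g_{n'} - P_n \log g \right| \le M_\mathrm{max}\, \sup_\boldtheta \frac{1}{n}\sum_{i=1}^{n} \left| N_{n',k}(\xpi;\boldtheta) - N(\xpi;\boldtheta) \right|.
\]
The crucial observation is that $N(\boldx;\boldtheta) = \mathbb{E}_Q[\exp(\boldtheta^\top\boldf(y,\boldx)) \mid X=\boldx]$ is exactly the $Q$-regression function estimated by $N_{n',k}(\boldx;\boldtheta)$. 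I would therefore establish strong uniform consistency of this $k$-NN regression estimate over the support of $P$, using the rate conditions $k_{n'}/\log n' \to \infty$ and $k_{n'}/n' \to 0$ and the absolute continuity of $\|X-\boldx\|$; the bounded covariate shift assumption guarantees $\mathrm{supp}(P)\subseteq\mathrm{supp}(Q)$, so every query point $\xpi$ admits close neighbours in $\dataset_Q$ and the neighbourhood-radius bias vanishes. To make this uniform in $\boldtheta$, I would cover the totally bounded parameter space by a finite $\epsilon$-net, use the Lipschitz continuity of $\boldtheta \mapsto \exp(\boldtheta^\top\boldf)$ (constant $\le M_\mathrm{max}F_\mathrm{max}$) to bound oscillations between net points, and apply a union bound over the net.

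The hardest part will be the joint control in the first term: the query points $\xpi$ and the regression sample $\dataset_Q$ are independent but both random, so the $k$-NN estimate must converge uniformly over the random query locations \emph{and} over $\boldtheta$ at once. I would handle this by conditioning on $\dataset_P$, treating the $\xpi$ as fixed query points inside the $Q$-support guaranteed by bounded covariate shift, and invoking an exponential concentration inequality for the $k$-NN average — this is precisely where $k_{n'}/\log n' \to \infty$ enters, letting the per-point deviation probabilities survive a union bound over both the $\epsilon$-net in $\boldtheta$ and a covering of the query support. The variance of each $k$-NN average is $O(1/k)$ uniformly (the summands are confined to $[1/M_\mathrm{max}, M_\mathrm{max}]$) and the bias vanishes as the neighbourhood shrinks; assembling these yields $\sup_\boldtheta \max_i |N_{n',k}(\xpi;\boldtheta) - N(\xpi;\boldtheta)| \overset{p}{\to} 0$, which is stronger than the averaged bound actually needed and completes the argument.
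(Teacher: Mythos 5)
Your decomposition and your handling of its second term are sound: splitting off $\sup_\boldtheta\left|P_n \log g_{n'} - P_n \log g\right|$ from the classical term $\sup_\boldtheta\left|(P_n-P)\log g\right|$, and disposing of the latter by noting that $\{\log g(\cdot,\cdot;\boldtheta)\}_\boldtheta$ is a bounded, Lipschitz-in-$\boldtheta$ class over a totally bounded parameter space, is a legitimate (and slightly cleaner) variant of what the paper does with its first two terms. The genuine gap is in your treatment of the $k$-NN term. You propose to prove the per-query-point statement $\sup_\boldtheta \max_i \left|N_{n',k}(\xpi;\boldtheta) - N(\xpi;\boldtheta)\right| \overset{p}{\to} 0$ by conditioning on $\dataset_P$, applying exponential concentration at each query, and asserting that ``the bias vanishes as the neighbourhood shrinks.'' The concentration (variance) half is fine under $k_{n'}/\log n' \to \infty$, but the bias half fails under the paper's assumptions: the bias at $\xpi$ is $\frac{1}{k}\sum_{j\in\mathcal{N}} N(\boldx_q^{(j)};\boldtheta) - N(\xpi;\boldtheta)$, and a shrinking $k$-NN radius gives you nothing unless $\boldx \mapsto N(\boldx;\boldtheta)$ (equivalently $q(y|\boldx)$ and the features, as functions of $\boldx$) is continuous --- indeed uniformly continuous, if you want the claim uniformly over query points. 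No such continuity appears among the paper's assumptions; likewise your finite covering of the ``query support'' presumes compact support of $P$, also not assumed, and the map $\boldx\mapsto N_{n',k}(\boldx;\boldtheta)$ is not regular in $\boldx$ (neighbour sets change discontinuously), so the covering/union-bound step is itself problematic. Pointwise or uniform-in-$\boldx$ consistency of $k$-NN regression is simply not a distribution-free fact; only the \emph{averaged} consistency is.

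That is precisely how the paper escapes: it never controls the $k$-NN error at individual query points. It bounds the model-approximation term by a $P$-average, converts it to a $Q$-average via the bounded covariate shift, $P\left|N_{n'}-N\right| \le R_\mathrm{max}\, Q\left|N_{n'}-N\right|$, so that the error is averaged with respect to the design distribution itself; then the \emph{universal} (distribution-free, smoothness-free) $L^2$-consistency of $k$-NN regression from Gy\"orfi et al., plus Jensen's inequality, gives $Q\left|N_{n'}-N\right| \overset{p}{\to} 0$ for each fixed $\boldtheta$. Uniformity in $\boldtheta$ is then obtained not by an $\epsilon$-net and union bound but by stochastic equicontinuity ($\boldtheta \mapsto Q\left|N_{n'}-N\right|$ is Lipschitz with constant $2M_\mathrm{max}F_\mathrm{max}$, by the mean value theorem) together with Andrews' Generic ULLN. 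If you want to salvage your route, replace your pointwise claim with the averaged one --- your own closing remark that the averaged bound ``is actually what's needed'' points the right way --- but you must then also convert your $P_n$-average over query points into a $Q$-average (a ULLN for $(P_n-P)$ conditional on $\dataset_Q$, followed by the $R_\mathrm{max}$ step), at which point you have essentially reproduced the paper's proof.
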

One of the key steps is to decompose the above empirical approximation error of the objective function into: Approximation error caused by using samples from $P$ + Modelling error caused by $k$-NN using samples from $Q$. It can be observed that the bound of density ratio $R_\mathrm{max}$ also contributes to the error. The complete proof is included in the appendix.
\section{Decomposing Paramter vs. Decomposing Model}
\label{sec.decom}
Instead of decomposing the model $p_{\boldtheta, \boldbeta} = g_{\boldtheta} h_{\boldbeta}$ as we propose in this paper, the Model-reuse methods (e.g. \cite{regularizedMultitaskLearning,raina2006TransferLearning}) decompose the \emph{parameter}: $\boldbeta_p = \boldtheta + \boldbeta_q,$
which leads to a problem of minimizing a KL divergence
\begin{align*}
	\min_{\boldbeta_q, \boldtheta} \mathrm{KL}\left[p\|h(\boldtheta+\boldbeta_q)\right].
\end{align*}
Two issues come with this criteria. First, this problem is not identifiable since there exist infinitely many possible combinations of $\boldtheta$ and $\boldbeta_q$ that minimizes the objective function. One must use extra assumptions. Model-reuse methods add a ``regularizer'' on parameter $\boldbeta_q$ using KL-divergence. 
\begin{align}
\label{eq.joint}
\hat{\boldbeta}_q, \hat{\boldtheta} = \argmin_{\boldbeta_q, \boldtheta} \mathrm{KL}\left[p\|h(\boldtheta+\boldbeta_q)\right] + \gamma \mathrm{KL}\left[q\|h(\boldbeta_q)\right],
\end{align}
which implies that the minimizer $\hat{\boldbeta}_q$ should also make the difference between $q$ and $h(\hat{\boldbeta}_q)$ small, in terms of KL divergence, and $\gamma$ is a ``balancing parameter'' has to be tuned using cross-validation which may be poor when the number of samples from $\dataset_P$ is low. As we will show later in the experiments, the choice of $\gamma$ is crucial to the performance when $n$ is small. 

Second, since the model must be normalized, i.e. $\int h(\boldtheta+\boldbeta_q) \; \mathrm{d}y = 1$, so $\boldbeta_q$ and $\boldtheta$ are always coupled, one must always solve them together, meaning the algorithm have to handle the complicated feature space for $\boldbeta_q$ and $\boldtheta$.

However, things are much easier if we have access to the true parameter of the posterior $\boldbeta^*_q$, then we can model the posterior of $p$ as $g(y,\boldx;\boldtheta) q(y|\boldx;\boldbeta^*_q)$, where $g$ is the model of the ratio. This setting leads to the proposed posterior ratio learning method:
\begin{align*}
\hat{\boldtheta} = \argmin_{\boldtheta} \mathrm{KL}\left[p\|g(y,\boldx;\boldtheta) q(y|\boldx;\boldbeta^*_q)\right].
\end{align*}
where $\boldbeta_q^*$ is a constant, so this optimization is with respect to $\boldtheta$ only. 
This paper presents an algorithm that can obtain an estimate of $g(y,\boldx;\boldtheta)$ even if one does not know 
$q(y|\boldx;\boldbeta^*_q)$ exactly. $q(y|\boldx;\hat{\boldbeta}_q)$ is learned separately and is multiplied with $g(y,\boldx;\hat{\boldtheta})$ in order to provide an posterior output. In comparison, the decomposition of model results two independent optimizations and we are free from the join objective where the choice of the parameter $\gamma$ is problematic. Neither do we have to assume that $\boldtheta$ and $\boldbeta_q$ are in the same parameter space.

\begin{figure*}[t]
	\centering
	\subfigure[$\ell(\hat{\boldtheta}; \dataset_P, \dataset_Q)$]{
		\label{fig.conv}
		\includegraphics[width=.3\textwidth]{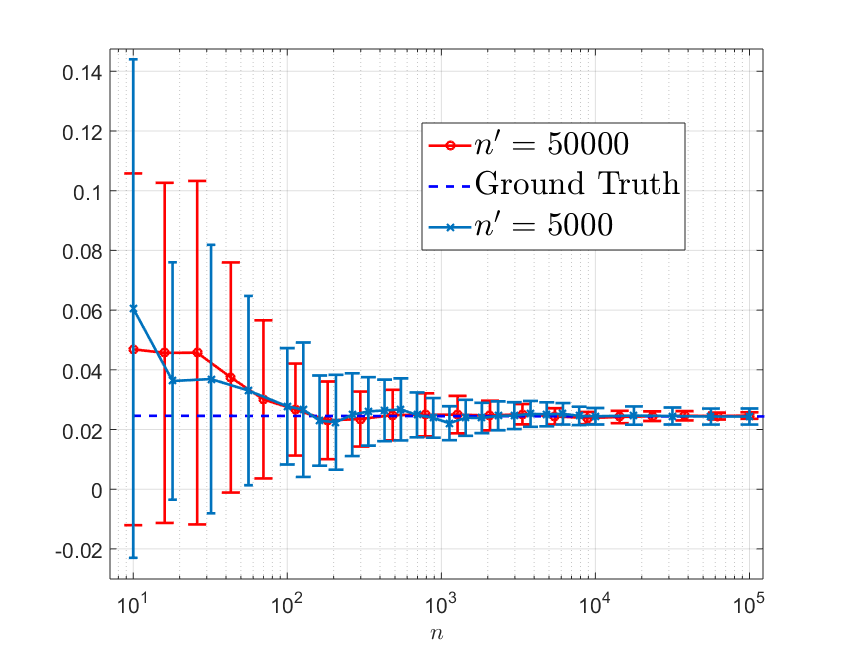}
	}
	\subfigure[ \emph{negative} hold-out likelihood ]{
		\label{fig.ll}
		\includegraphics[width=.3\textwidth]{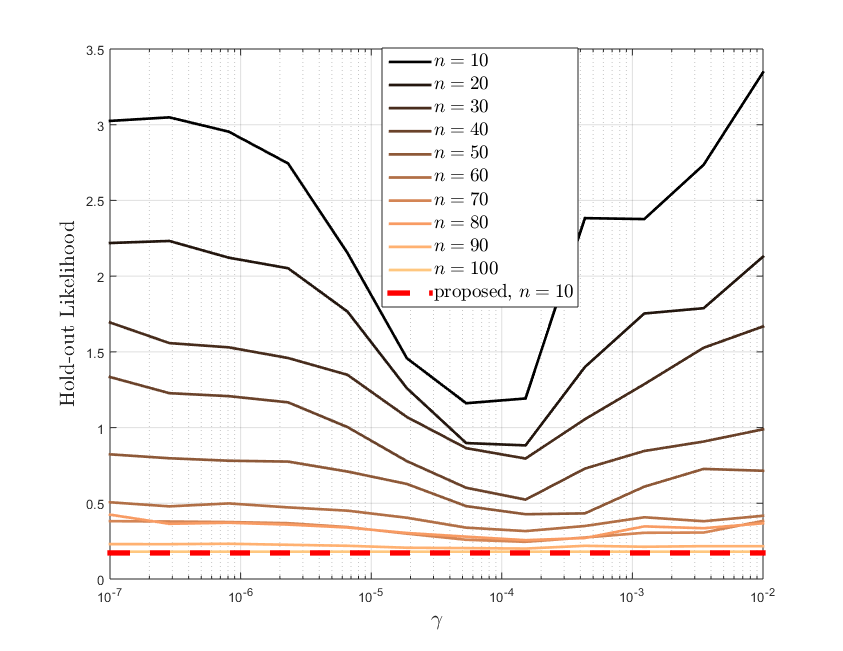}
	}
	\subfigure[Illustration of 4-Gaussian dataset shift]{
		\label{fig.illus}
		\includegraphics[width=.32\textwidth]{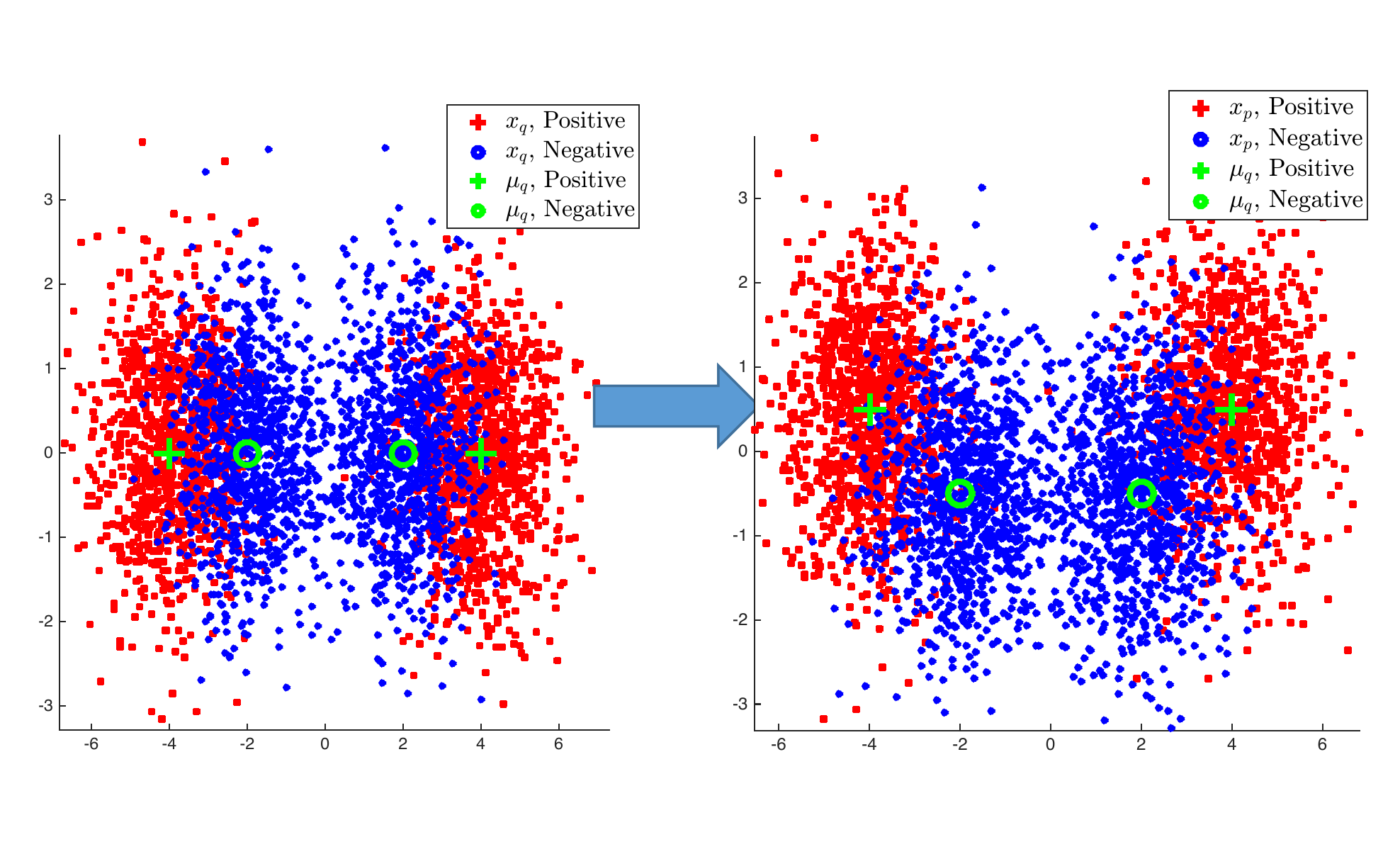}
	}	
	\subfigure[$p(y|\boldx;\hat{\boldbeta}_p)$, miss-rate: 13.8\%.]{
		\label{fig.4gau.p}
		\includegraphics[width=.3\textwidth]{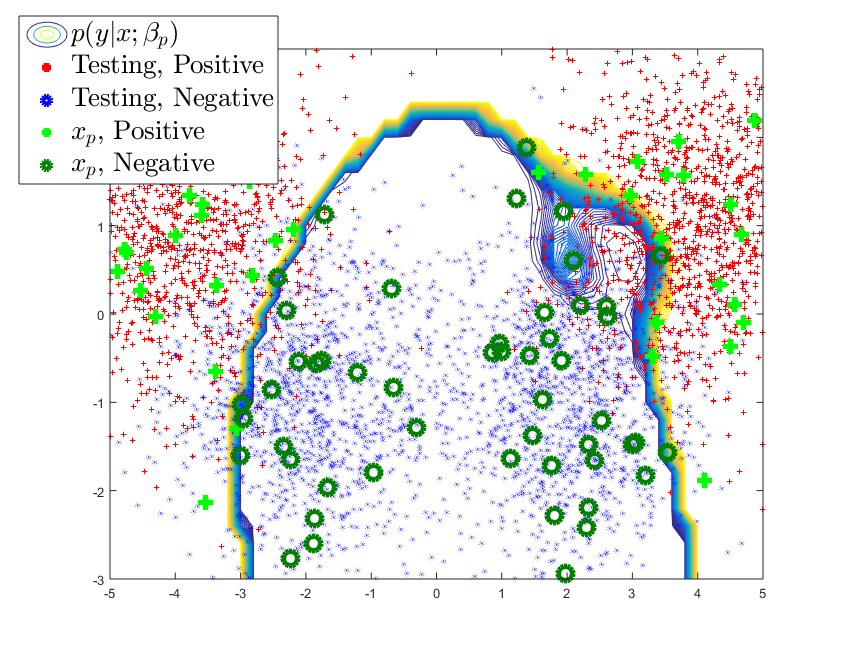}
	}
	\subfigure[$q(y|\boldx;\hat{\boldbeta}_q)$, miss-rate: 15.2\%]{
		\label{fig.4gau.q}
		\includegraphics[width=.3\textwidth]{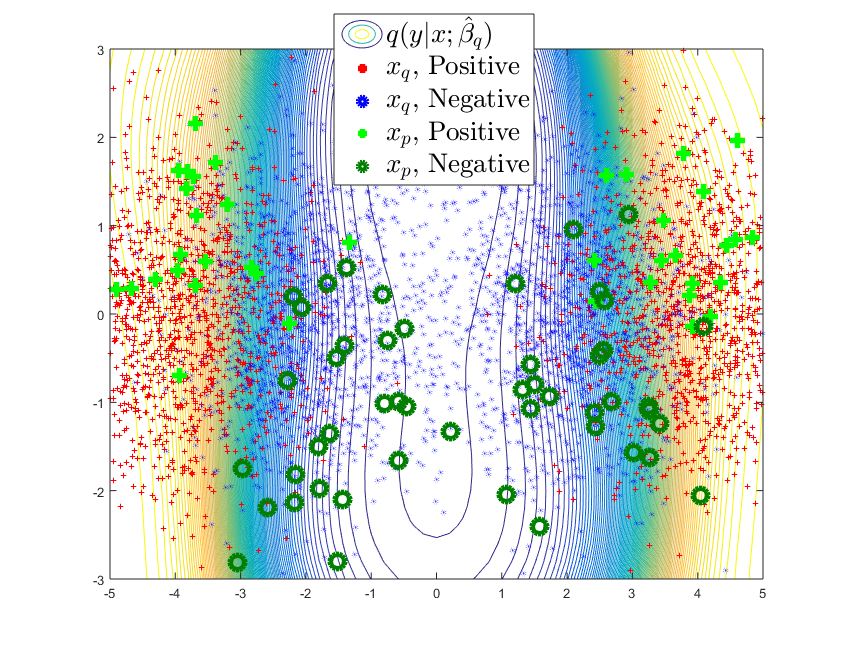}
	}	
	\subfigure[$g(y, \boldx;\hat{\boldtheta} )q(y|\boldx;\hat{\boldbeta}_q)$, miss-rate: 8.0\%]{
		\label{fig.4gau.proposed}
		\includegraphics[width=.3\textwidth]{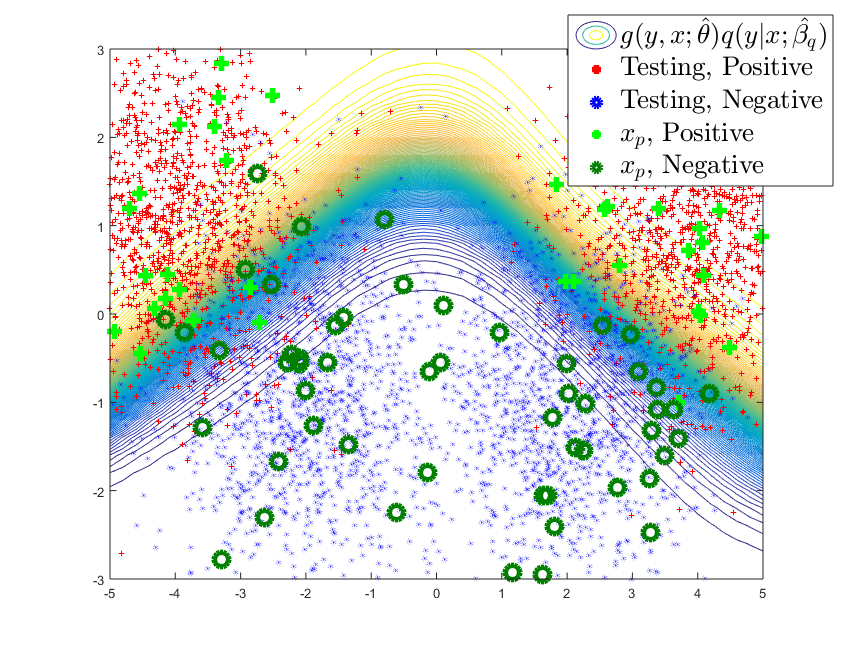}
	}
	\caption{Experiments on artificial datasets}
\end{figure*}
\begin{figure*}[t]
	\centering
	\subfigure[sci.crypt]{\label{fig.20news.12}\includegraphics[width=.3\textwidth]{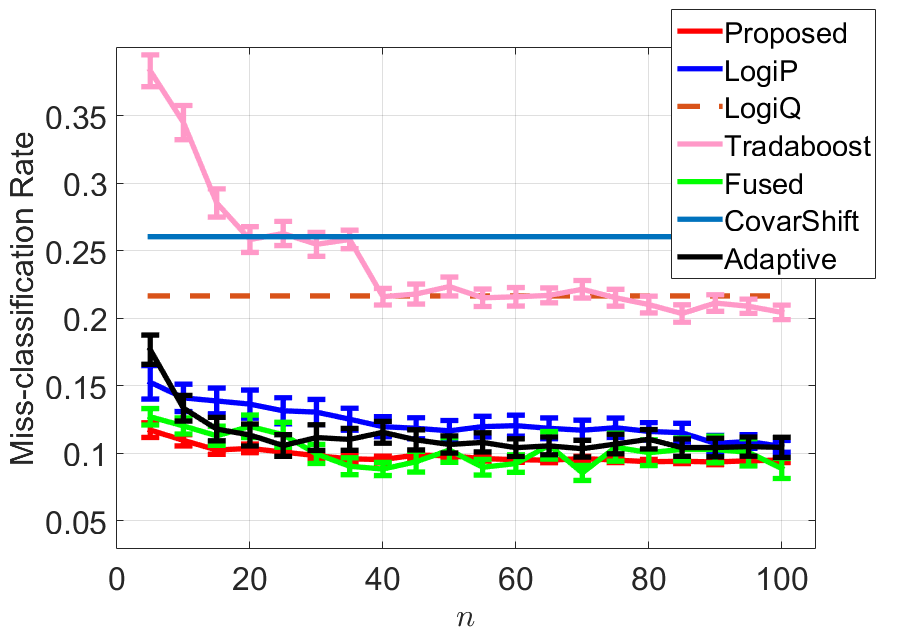}}
	\subfigure[sci.electronics]{\label{fig.20news.13}\includegraphics[width=.3\textwidth]{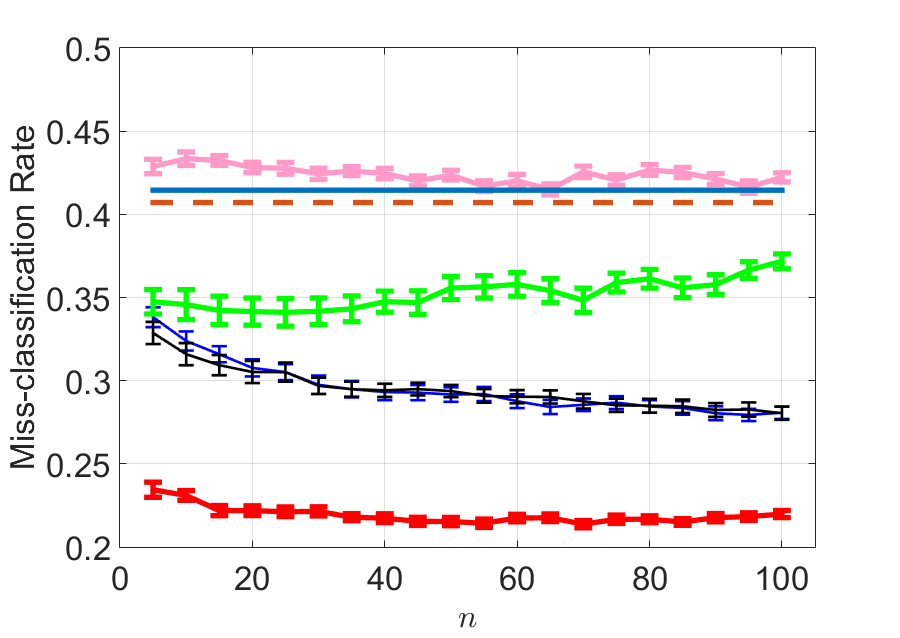}}
	\subfigure[sci.med]{\label{fig.20news.14}\includegraphics[width=.3\textwidth]{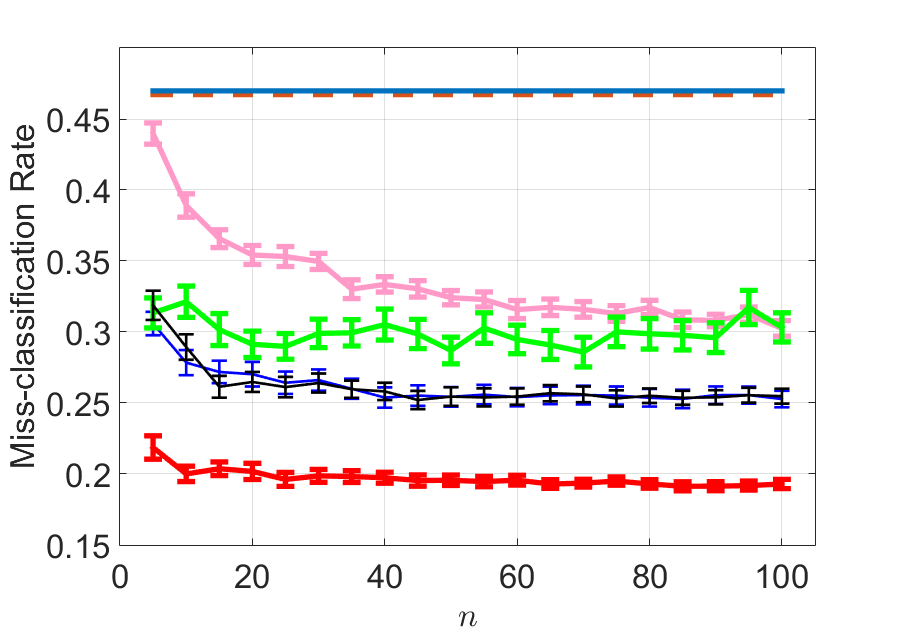}}
	\subfigure[sci.space]{\label{fig.20news.15}\includegraphics[width=.3\textwidth]{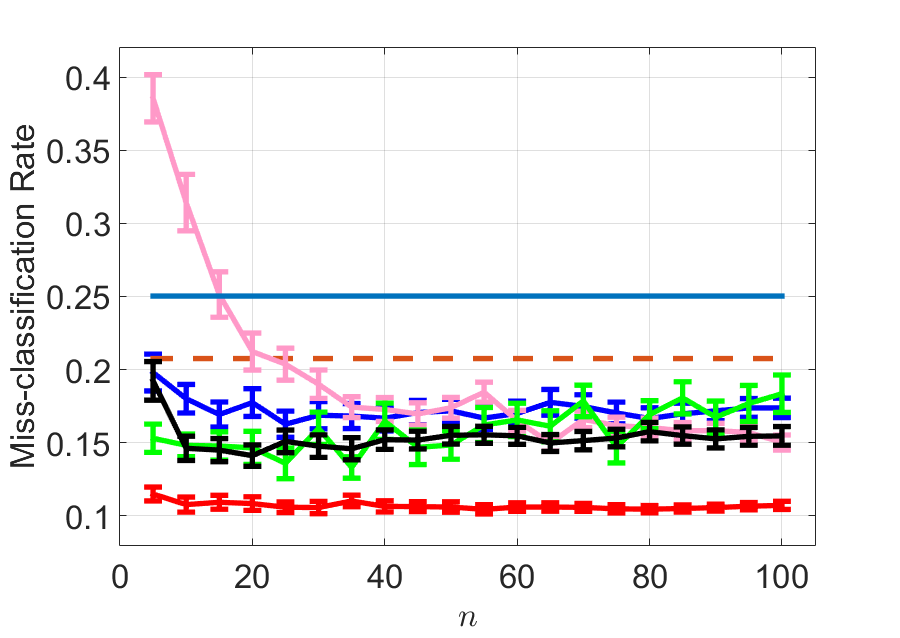}}
	\subfigure[talk.politics.guns]{\includegraphics[width=.3\textwidth]{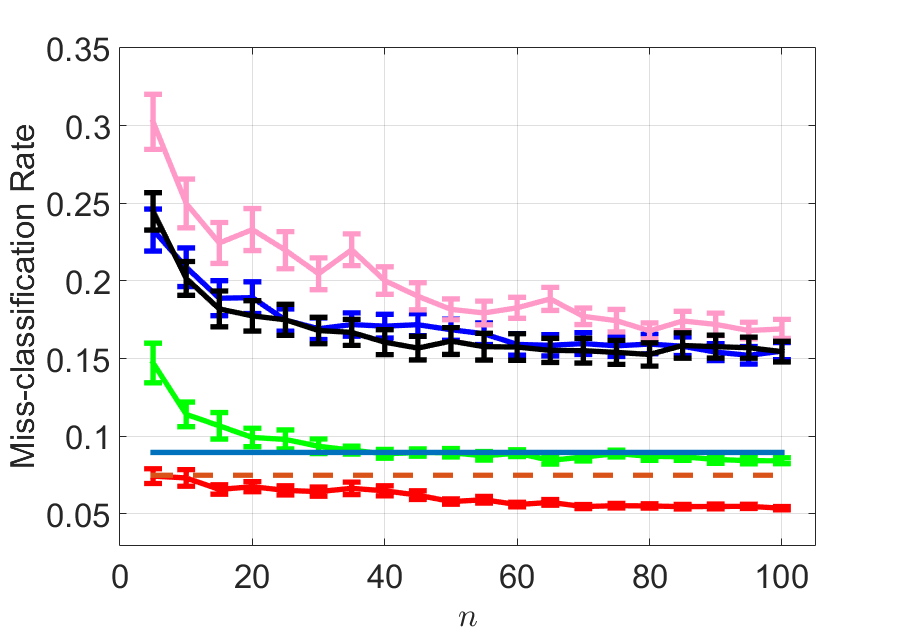}}
	\subfigure[talk.politics.mideast]{\includegraphics[width=.3\textwidth]{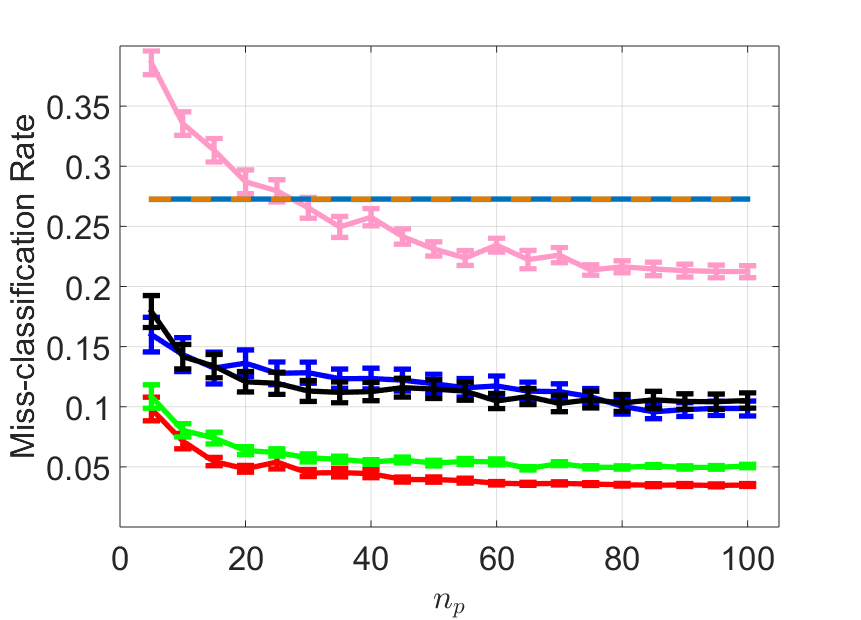}}
	\subfigure[talk.politics.misc]{\includegraphics[width=.3\textwidth]{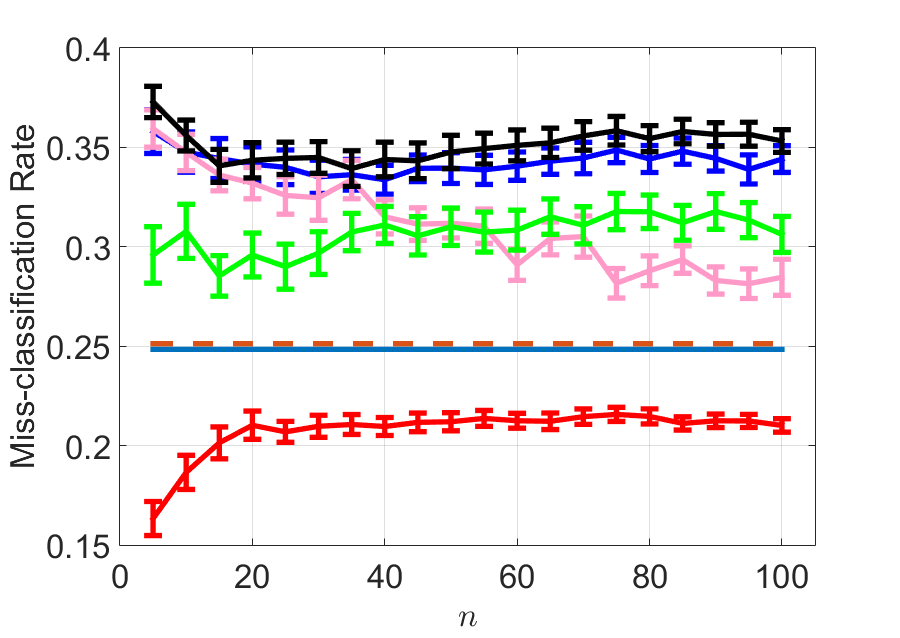}}
	\subfigure[talk.religion.misc]{\includegraphics[width=.3\textwidth]{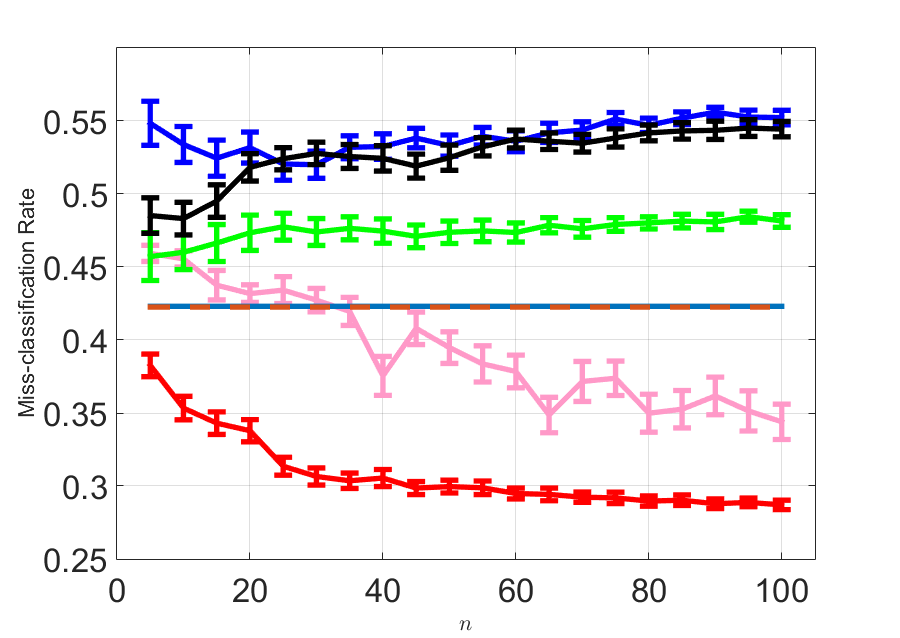}}
	\caption{20 News datasets.}
	\label{fig.20news}
\end{figure*}
\begin{figure*}[t]
	\centering
	\subfigure[kitchen]{\label{fig.kitchen}\includegraphics[width=.3\textwidth]{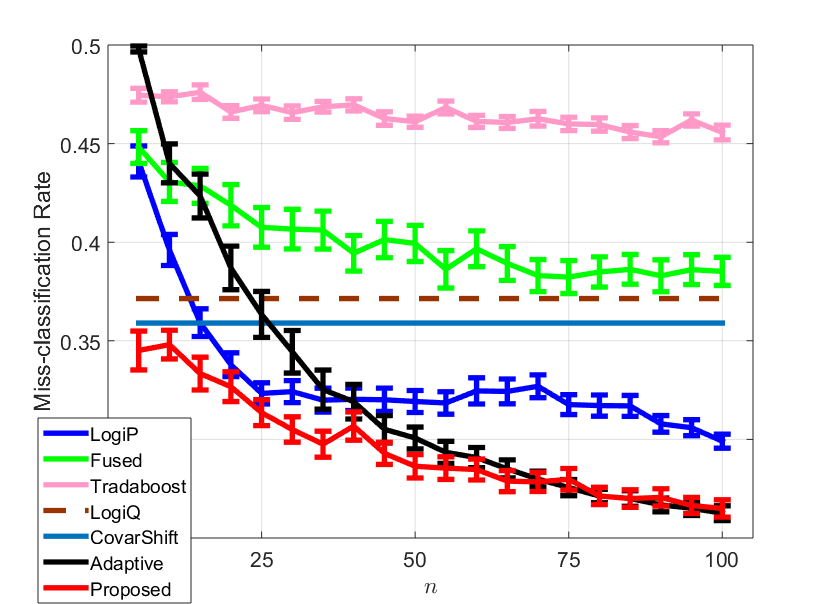}}
	\subfigure[dvd]{\label{fig.dvd}\includegraphics[width=.3\textwidth]{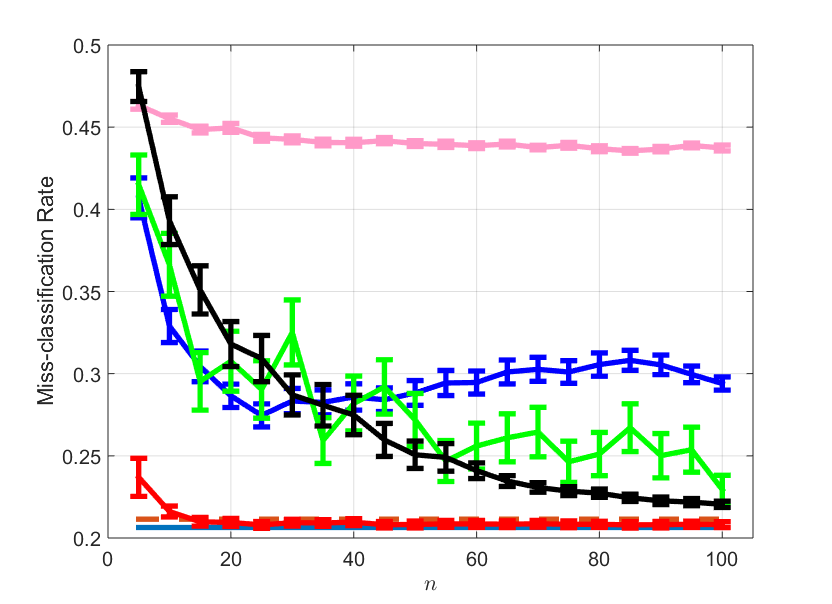}}
	\subfigure[books]{\label{fig.books}\includegraphics[width=.3\textwidth]{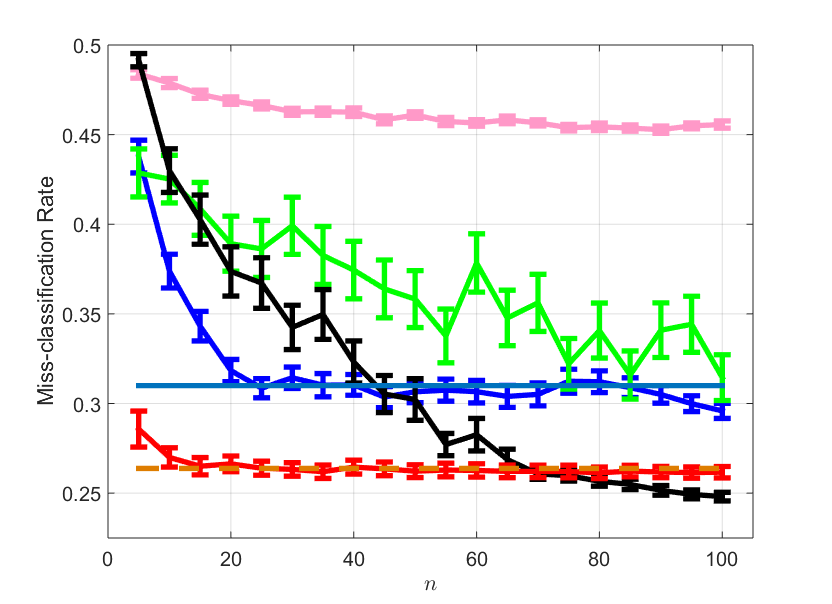}}
	\caption{Amazon sentiment datasets.}
	\label{fig.sentiment}
\end{figure*}
\section{Experiments}
We fix the feature function $\boldf$ as
$
\boldf(\boldx,y) := y\left[ \boldx, 1\right]^\top.
$
It is consistent with our ``simple transfer model'' assumption discussed in Section \ref{sec.model}.
\subsection{Synthetic Experiments}
\label{sec.synt}
	\paragraph{KL convergence} The first experiment uses our trained posterior ratio model to approximate the conditional KL divergence. Since our estimate $\hat{\boldtheta} \rightarrow \boldtheta^*$, we hope to see
$
	\ell(\hat{\boldtheta}; \dataset_P, \dataset_Q) - \mathrm{KL}\left[p\|q\right]\rightarrow 0
$
	as $n, n' \rightarrow \infty$.
	We draw two balanced-classes of samples from two Gaussian distributions with different means for $P$ and $Q$. Specifically, for $y = \{-1,1\}$, we construct $P$ and $Q$ as follows:
	\begin{align*}
	q(x|1) = \mathrm{Normal}(2, 1), & q(x|-1) = \mathrm{Normal}(-2, 1), \\
	p(x|1) = \mathrm{Normal}(1.5, 1), & p(x|-1) = \mathrm{Normal}(-1.5, 1).
	\end{align*}
	We draw 5k samples from distribution Q, $n$ samples from $P$, and $k$ is chosen to minimize the error of conditional mean estimation (same below, as it is introduced in the appendix), then train a posterior ratio $g(y,x;\hat{\boldtheta})$.
	By varying $n$ and random sampling, we may create a plot for averaged
	$
	\ell(\hat{\boldtheta}; \dataset_P, \dataset_Q) ,
	$
	with standard error over 25 runs in Figure \ref{fig.conv}. The true conditional KL divergence is plotted alongside as a blue horizontal dash-line. To make comparison, we run the same estimation again with 50k samples from $Q$, and plot in red. 
	
	The result shows, our estimator does converge to the true KL divergence, and the estimation error shrinks as $n \rightarrow \infty$. Increasing $n'$ also help slightly reduce the variance (comparing the blue error bar with the red error bar). However, such improvement is not as significant as increasing $n$. 

	\paragraph{Joint vs. Separated}
	In this experiment, we demonstrate the effect of introducing a ``balancing parameter'' $\gamma$ of the joint optimization method discussed in Section \ref{sec.decom}. We simply reuse the dataset in the previous experiment, and test the averaged negative hold-out likelihood of the approach described in \eqref{eq.joint} and the proposed method using $\dataset_P$ of various sizes. It can be seen that the choice of the parameter $\gamma$ has huge effect on the hold-out likelihood when $n$ is small. However, the proposed method is free from such parameter and can achieve a very low likelihood even when using only 10 samples from $\dataset_P$.

	\paragraph{4-Gaussian}
	The second experiment demonstrates how a simple transfer model helps transfer a non-linear classifier. The dataset $\dataset_Q$ is constructed using mixtures of Gaussian distributions with different means on horizontal axis and two classes of samples are not linearly separable. To create dataset $\dataset_P$, we simply shift their means away from each other on the vertical dimension (See Figure \ref{fig.illus}). We compare the posterior functions learned by kernel logistic regression performed on $\dataset_P$ (Figure \ref{fig.4gau.p}) and $\dataset_Q$ (Figure \ref{fig.4gau.q}) with the proposed transfer learning method (Figure \ref{fig.4gau.proposed}) which is a multiplication of the learned $g(y,\boldx;\hat{\boldtheta})$ and  $q(y|\boldx;\hat{\boldbeta}_q)$.
	
	We set $n = 40$, $n'=5000$. It can be seen from Figure \ref{fig.4gau.p} that although kernel logistic regression has learned the rough decision boundary by using $\dataset_P$ only, it has completely missed the characteristics of the posterior function near the class border due to lack of observations. In contrast, built upon a successfully learned posterior function on dataset $\dataset_Q$ (Figure \ref{fig.4gau.q}), the proposed method successfully transferred the posterior function for the new dataset $\dataset_P$, even though it is equipped only with linear features (Figure \ref{fig.4gau.proposed}). The classification boundary it provides is highly non-linear.

\subsection{Real-world Applications}
\paragraph{20-news}
	Experiments are run on 20-news dataset where articles are grouped into major categories (such as ``sports'') and sub-categories (such as ``sports.basketball''). In this experiment, we adopt ``one versus the others'' scenario: i.e. The task is to predict whether an article is drawn from a sub-category or not. 
	We first construct $\dataset_P$ by randomly selecting a few samples from a certain sub-category $T$ and then mix them with equal number of samples from the rest of the categories. $\dataset_Q$ is constructed using abundant random samples from the same major- but different sub-categories and random samples from all the rest categories as negative samples.  
	We adopt PCA and reduce the dimension to just 20.
	
	Figure \ref{fig.20news} summarizes the miss-classification rate of the proposed transfer learning algorithm and all the other methods: \textbf{LogiP} logistic regression on $\dataset_P$,
	\textbf{LogiQ} logistic regression on $\dataset_Q$, \textbf{TrAdaBoost} \cite{dai2007boosting},
	\textbf{Reg} \cite{regularizedMultitaskLearning}, \textbf{CovarShift} \cite{Covariate_Shift_jour,kanamori2009leastsquare} and \textbf{Adaptive} \cite{yang2007cross} over different sub-category $T$ in the ``sci'' and ``talk'' category. The result shows that the proposed method works well in almost all cases, while the comparison methods \textbf{Reg} \textbf{CovarShift} and \textbf{TrAdaBoost}, some times have difficulties in beating the naive base line \textbf{LogiP} and \textbf{LogiQ}. In most cases, \textbf{Adaptive} cannot improve much from \textbf{LogiP}.

\paragraph{Amazon Sentiment}
	The final experiment is conducted on the Amazon sentiment dataset, where the task is to classify the positive or negative sentiment from user's review comments on ``kitchen, electronics, books and dvds''. Since some of the products (such as electronics) are far better reviewed than the others (such as kitchen tools ), it is ideal to transfer a classifier from a well-reviewed product to another one. 
	
	In this experiment, we first sample $\dataset_P$ from one product $T$ and construct dataset $\dataset_Q$ using all samples from all other products. We apply locality preserving projection \cite{he2005neighborhood} to reduce the original dimension from $\sim 148000$ to 30. 
	
	The classification error rate is reported in Figure \ref{fig.sentiment} for $T$ = ``kitchen'', ``dvd'' and ``books''. We omit the $T$ = ``electronics'' since it is noticed that \textbf{logiP} and \textbf{logiQ} has very close performance on this dataset suggesting transfer learning is not helpful.
	
	It can be seen that the proposed method has also achieved low miss-classification rate on all three datasets, even though \textbf{Adapvtive} gradually catches up when $n$ is large enough. Interestingly, Figure \ref{fig.dvd} and \ref{fig.books} show that \textbf{logiQ} can achieve very low error rate, and the proposed method manage to reach similar rates. Even if the benefit of transferring is not clear in these two cases, the proposed method does not seem to bring in extra errors by also considering samples from target dataset $\dataset_P$ which could have been misleading.
	
\section{Conclusions}
	As modern classifiers get increasingly complicated, the cost of transfer learning become major concern: As in many applications, the transfer should be both quick and accurate. To reduce the modeling complexity, we introduce a composite method: learn a posterior ratio and the source probabilistic classifier separately then combine them together later. As the posterior ratio allows the incremental modeling, features, no matter how complicated, can be ignored as long as they do not participate in the dataset transfer. The posterior ratio is learned via an efficient convex optimization and is proved consistent. Experiments on both artificial and real-world datasets  give promising results.
\bibliographystyle{plain}
\bibliography{main}
	\clearpage
	\section*{Appendix, Proof for Lemma 1}
	\begin{proof}
		First, we decompose supremum of the approximation error of the empirical objective function:
		\begin{align}
		&\sup_\boldtheta  \left|P_n \log g_{n'}(y,\boldx;\boldtheta) - P \log g(y,\boldx;\boldtheta) \right|\notag\\
		=& \sup_\boldtheta  |\left(P_n \boldtheta^\top \boldf(y,\boldx) - P_n\log N_{n'}\left(\boldtheta;\boldx\right)\right) - \notag\\
		&~~~~~~\left(P \boldtheta^\top \boldf(y,\boldx) - P \log N\left(\boldtheta;\boldx\right)\right) | \notag \\
		\le& \sup_\boldtheta  \left|(P_n - P) \boldtheta^\top \boldf(y,\boldx)\right|  + \notag\\ 
		&\sup_\boldtheta \left|P_n\log N_{n'}\left(\boldtheta;\boldx\right) - P \log N\left(\boldtheta;\boldx\right) \right| \notag\\
		\le& \sup_\boldtheta  \left|(P_n - P) \boldtheta^\top \boldf(y,\boldx) \right|  + \notag\\
		&\sup_\boldtheta | (P_n - P) \log N_{n'}\left(\boldtheta;\boldx\right) + \notag\\
		&~~~~~~P\log N_{n'}\left(\boldtheta;\boldx\right) - P \log N\left(\boldtheta;\boldx\right) | \notag\\
		\le& \sup_\boldtheta \left| (P_n - P) \boldtheta^\top \boldf(y,\boldx) \right| + \label{eq.upperbound.difference}\\
		&\sup_\boldtheta  \left|(P_n- P)\log N_{n'}\left(\boldtheta;\boldx\right) \right| 
		+ \notag\\
		&\sup_\boldtheta \left|P\log N_{n'}\left(\boldtheta;\boldx\right) - P\log N\left(\boldtheta;\boldx\right)\right| \notag
		\end{align}
		The first two terms in \eqref{eq.upperbound.difference} is due to the approximation using samples from $\dataset_P$, while the third term is the \emph{model approximation error} caused by using $k$-NN to approximate $N(\boldx;\boldtheta)$.
		The first two terms are relatively easy to bound. The Uniform Law of Large Numbers (see, e.g. Lemma 2.4 in \cite{uniformLLN}) can be applied to show the first two terms converges to 0 in probability, since i. $\boldtheta$ is compact, ii. both $\boldtheta^\top \boldf(y,\boldx)$ and $\log N_{n'}$ are continuous over $\boldtheta$, iii. both above functions are Lipschitz continuous as we will show later. As to the third term, we first prove for all $\epsilon>0$
		\begin{align*}
			\mathrm{Prob}\left(\sup_\boldtheta P\log N_{n'}\left(\boldtheta;\boldx\right) - P\log N\left(\boldtheta;\boldx\right) > \epsilon \right) \to 0
		\end{align*}
		by using the following inequality:
		$\log a - \log b \le \frac{a-b}{b}$.
		\begin{align}
		&\sup_\boldtheta P~~\log N_{n'}\left(\boldtheta;\boldx\right) - P~~ \log N\left(\boldtheta;\boldx\right) \notag\\
		\le& \sup_\boldtheta P~~ \frac{N_{n'}\left(\boldtheta;\boldx\right) -  N\left(\boldtheta;\boldx\right)}{N\left(\boldtheta;\boldx\right)}\notag\\
		\le& \sup_\boldtheta P~~ \left|\frac{N_{n'}\left(\boldtheta;\boldx\right) -  N\left(\boldtheta;\boldx\right)}{N\left(\boldtheta;\boldx\right)}\right|\notag\\
		\le& M_{\text{max}}\sup_\boldtheta P~~ \left|N_{n'}\left(\boldtheta;\boldx\right) -  
		N\left(\boldtheta;\boldx\right)\right|
		\notag\\
		\le&  R_{\text{max}} M_{\text{max}}\sup_\boldtheta Q~~ \left|N_{n'}\left(\boldtheta;\boldx\right) -  N\left(\boldtheta;\boldx\right)\right| \label{eq.golden.equation}
		\end{align}
		To show the final line converges to 0 with probability one, we use the \emph{Generic Uniform Law of Large Numbers (Generic ULLN)} (see \cite{andrews1992generic} Theorem 1.):
		\begin{mytheorem}[Generic ULLN]
			For a random sequence $\{G_n(\boldtheta), \boldtheta \in \Theta, n\ge 1\}$, if $\Theta$ is a totally bounded metric space, $G_n(\boldtheta)$ is stochastic equicontinous (SE) and $\left|G_n(\boldtheta) \right|\overset{p}{\to} 0, \forall \boldtheta$, then $\sup_{\boldtheta} G_{n}(\boldtheta) \overset{p}{\to} 0$ as $n\to \infty$.
		\end{mytheorem}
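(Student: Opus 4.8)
\section*{Proof proposal for the Generic ULLN}

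The plan is to prove this via the classical ``covering plus equicontinuity'' argument, which reduces uniform convergence over $\Theta$ to convergence at finitely many grid points. It is cleanest to establish the stronger conclusion $\sup_\boldtheta |G_n(\boldtheta)| \overset{p}{\to} 0$, since this immediately yields the stated $\sup_\boldtheta G_n(\boldtheta)\overset{p}{\to}0$ (and in the intended application $G_n\ge 0$, so the two coincide). Concretely, I would fix arbitrary $\epsilon>0$ and $\eta>0$ and aim to show that $\limsup_n \mathrm{Prob}\left(\sup_\boldtheta |G_n(\boldtheta)| > \epsilon\right) \le \eta$; since $\eta$ is arbitrary this gives convergence in probability.

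First I would unpack stochastic equicontinuity, which is exactly the hypothesis providing uniform-in-$n$ control of the modulus of continuity: for the chosen $\epsilon,\eta$ there is a radius $\delta>0$ with
\[
\limsup_n \mathrm{Prob}\left(\sup_{\rho(\boldtheta,\boldtheta')<\delta} \left|G_n(\boldtheta)-G_n(\boldtheta')\right| > \epsilon/2\right) < \eta/2,
\]
where $\rho$ is the metric on $\Theta$. This $\delta$ is what lets me ``bridge'' an arbitrary parameter to a nearby reference point, uniformly in $n$. Next, because $\Theta$ is totally bounded, for this fixed $\delta$ there is a finite $\delta$-net $\{\boldtheta_1,\dots,\boldtheta_K\}$, so that every $\boldtheta$ lies within $\delta$ of some center $\boldtheta_{j(\boldtheta)}$. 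The triangle inequality then gives the decoupling bound
\[
\sup_\boldtheta \left|G_n(\boldtheta)\right| \le \sup_{\rho(\boldtheta,\boldtheta')<\delta} \left|G_n(\boldtheta)-G_n(\boldtheta')\right| + \max_{1\le j\le K} \left|G_n(\boldtheta_j)\right|,
\]
separating the two sources of error: the equicontinuity ``fluctuation'' term and the finite-grid term.

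To finish, I would apply a union bound across the two right-hand terms. The first is handled directly by the stochastic-equicontinuity statement above (limsup probability below $\eta/2$). For the second, the event $\max_{j\le K}|G_n(\boldtheta_j)|>\epsilon/2$ forces some individual $|G_n(\boldtheta_j)|>\epsilon/2$, so a finite union bound plus the pointwise hypothesis $|G_n(\boldtheta_j)|\overset{p}{\to}0$ drives its probability to $0$ as $n\to\infty$ --- here it is essential that $K=K(\delta)$ is fixed, since $\delta$ was chosen before passing to the limit in $n$. Collecting terms yields $\limsup_n \mathrm{Prob}(\sup_\boldtheta|G_n(\boldtheta)|>\epsilon) \le \eta/2 < \eta$, completing the argument.

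The main obstacle is conceptual rather than computational: it is pinning down the precise form of stochastic equicontinuity in force and confirming that it delivers a single $\delta$ with a limsup-in-$n$ (not merely pointwise-in-$n$) control of the modulus of continuity. Only that formulation allows the finitely many pointwise limits and the equicontinuity bound to combine. The closely related subtlety is the order of quantifiers --- $\delta$ must be selected, and the net frozen, \emph{before} letting $n\to\infty$ --- which is precisely what keeps the net size $K$ finite and independent of $n$ throughout the estimate.
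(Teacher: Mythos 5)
Your proof is correct: the finite $\delta$-net from total boundedness, the stochastic-equicontinuity control of the modulus (with $\delta$ frozen before $n\to\infty$), and the union bound over the $K$ net points combine exactly as you say, and your stronger conclusion $\sup_\boldtheta |G_n(\boldtheta)|\overset{p}{\to}0$ implies the stated one. Note that the paper itself gives no proof of this statement --- it is quoted verbatim from Andrews (1992), Theorem 1 --- and your argument is essentially the standard proof given there, so there is nothing in the paper for it to diverge from.
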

		Since by assumption, $\boldtheta$ is bounded. We now verify the rest two conditions of this theorem.
		The universal consistency of $k-$NN has been proved (see \cite{gyorfi2002distribution}, Theorem 23.8, 23.7). Here we restate the results for our conveniences:
		\begin{mytheorem}[Universal consistency of KNN]
			\label{thm.knn.uni.consis}
			Given $Z$ is bounded, assume that for each $\boldx$, the random variable $\|X-\boldx\|$ is absolutely continuous, if $k_{n'} / \log n' \rightarrow \infty $ and $k_{n'} / n' \rightarrow 0 $, $k_{\text{n'}}-$NN estimator is strongly universally consistent, i.e., 
			\begin{align*}
			\lim_{n'\rightarrow \infty} \int  \left|\frac{1}{k}\sum_{j\in \mathcal{N}_{n',k}(\boldx)} z_j- \mathbb{E}\left[Z|X=\boldx\right]\right|^2 d\mu(\boldx)\rightarrow 0 
			\end{align*}
			with probability one for all distributions $(Z,X)$,
			where $\mu(\boldx)$ is the probability measure of $\boldx$.
		\end{mytheorem}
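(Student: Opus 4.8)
The plan is to derive this restatement of the classical $k$-nearest-neighbour consistency result from \textbf{Stone's theorem} on weak universal consistency of local-averaging estimators, and then to upgrade convergence-in-mean to the stated almost-sure convergence by an exponential concentration argument. First I would write the estimate as a weighted average $m_{n'}(\boldx) = \sum_{j=1}^{n'} W_{n',j}(\boldx)\, Z_j$, where $W_{n',j}(\boldx) = 1/k$ when $\boldx_q^{(j)}$ is one of the $k$ nearest neighbours of $\boldx$ and $W_{n',j}(\boldx) = 0$ otherwise, and set $m(\boldx) = \mathbb{E}[Z \mid X = \boldx]$. The absolute-continuity hypothesis on $\|X - \boldx\|$ guarantees that ties among neighbour distances occur with probability zero, so the neighbour set $\mathcal{N}_{n',k}(\boldx)$, and hence the weights, are almost surely well defined.

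Next I would verify the hypotheses of Stone's theorem for these weights. They are nonnegative, sum to one, and satisfy $\max_j |W_{n',j}(\boldx)| = 1/k \to 0$ because $k \to \infty$ (which is forced by $k_{n'}/\log n' \to \infty$); the shrinking-neighbourhood requirement $\sum_j |W_{n',j}(X)|\, I(\|X_j - X\| > a) \to 0$ for every $a>0$ follows from $k_{n'}/n' \to 0$, since then the $k$-th nearest-neighbour distance tends to zero. The only substantial condition is the transport bound $\mathbb{E}\big[\sum_j W_{n',j}(X)\, f(X_j)\big] \le c\, \mathbb{E}[f(X)]$, uniformly in $n'$, for every nonnegative $f$; this is \textbf{Stone's lemma}, proved by covering $\mathbb{R}^d$ with finitely many cones of angular width less than $\pi/3$ and showing that within each cone a fixed point can be among the $k$ nearest neighbours of at most $k$ sample points, so $c$ depends only on the dimension $d$. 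With these conditions in hand, Stone's theorem yields the in-mean bound $\mathbb{E}\int |m_{n'}(\boldx) - m(\boldx)|^2\, d\mu(\boldx) \to 0$.

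To upgrade this to convergence with probability one I would split $m_{n'} - m$ into a \emph{noise} part $m_{n'} - \tilde m_{n'}$, with $\tilde m_{n'}(\boldx) = \frac1k \sum_{j \in \mathcal{N}_{n',k}(\boldx)} m(\boldx_q^{(j)})$ the estimate obtained by replacing the responses with the true regression function, and a design-measurable part $\tilde m_{n'} - m$. Conditioning on the covariates, the noise part at each $\boldx$ is an average of $k$ bounded, conditionally mean-zero residuals $Z_j - m(\boldx_q^{(j)})$, so Hoeffding together with the cone bound gives a conditional exponential tail of order $\exp(-c\, k_{n'} t^2)$ for $\int |m_{n'} - \tilde m_{n'}|^2\, d\mu$, while the design part vanishes by the same shrinking-neighbourhood estimates that drove the weak-consistency step. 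This is exactly where $k_{n'}/\log n' \to \infty$ is used: for any fixed $t>0$ it forces $c\, k_{n'} t^2 \ge 2 \log n'$ eventually, so the tail probabilities are summable and Borel--Cantelli gives almost-sure convergence; combined with the in-mean bound from Stone's theorem this yields $\int |m_{n'} - m|^2\, d\mu \to 0$ with probability one.

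I expect the main obstacle to be \textbf{Stone's lemma}, the geometric cone-covering bound that transports weights from query points to sample points with a dimension-only constant, since it underlies both the verification of Stone's conditions and the counting used in the concentration step, and it is precisely here that the no-ties consequence of the absolute-continuity hypothesis is genuinely needed.
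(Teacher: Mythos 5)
First, a point of comparison: the paper does not prove this statement at all. It is imported verbatim (with notation adapted) from the cited monograph of Gy\"orfi, Kohler, Krzy\.{z}ak and Walk (2002), Theorems 23.7--23.8, and the paper's ``proof'' is that citation. So your proposal can only be measured against the textbook argument, and at the level of strategy it is essentially the same: verify Stone's conditions (via the cone-covering lemma) to get an in-mean statement, then upgrade to almost-sure convergence with conditional exponential inequalities and Borel--Cantelli, with $k_{n'}/\log n' \to \infty$ making tails summable and $k_{n'}/n' \to 0$ shrinking the neighbourhoods. You also correctly identify that the absolute-continuity hypothesis is what rules out ties.

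There are, however, two concrete gaps. The central one is your concentration step: the claim that ``Hoeffding together with the cone bound gives a conditional exponential tail of order $\exp(-c\,k_{n'}t^2)$ for $\int |m_{n'}-\tilde m_{n'}|^2\, d\mu$'' does not follow as stated. Hoeffding controls the centered average at a \emph{fixed} query point $x$ (a fixed neighbour set), while the integral ranges over all $x$; and Stone's cone lemma bounds the number of \emph{sample} points that have a given $X_j$ among their $k$ nearest neighbours, which is a different object from the $\mu$-measure of the set of query points $x$ with $X_j \in \mathcal{N}_{n',k}(x)$. To pass from pointwise tails to the integral you need an additional counting idea: the map $x \mapsto \mathcal{N}_{n',k}(x)$ takes only finitely many values (the cells of the order-$k$ Voronoi diagram, polynomially many in $n'$ for fixed $d$), so a union bound over cells gives a tail of the form $\mathrm{poly}(n')\exp(-c\,k_{n'}t^2)$, and it is exactly in absorbing this polynomial factor that $k_{n'}/\log n' \to \infty$ is consumed. (Alternatively one can use McDiarmid's bounded-differences inequality, but that requires its own bound on $\sup_j \mu\{x : X_j \in \mathcal{N}_{n',k}(x)\}$, which again the cone lemma does not directly supply.) As written, this step of your argument would fail.

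The second gap is the design term $\tilde m_{n'} - m$. You dismiss it ``by the same shrinking-neighbourhood estimates that drove the weak-consistency step,'' but those estimates are statements about expectations, and convergence in mean of a random quantity does not yield the almost-sure convergence the theorem asserts. This term needs its own argument: either a separate concentration inequality, or the standard route of approximating the regression function $m$ in $L_2(\mu)$ by continuous functions of compact support (recall universality: $m$ is only measurable, so no continuity can be assumed), transporting the approximation error with Stone's lemma, and using that the $k$-th nearest-neighbour distance tends to zero almost surely when $k_{n'}/n' \to 0$. Both gaps are fillable---this is what the cited textbook proof does---but your sketch elides precisely the two places where the real technical work, and the hypothesis $k_{n'}/\log n' \to \infty$, actually enter.
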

		From Jensen's inequality, we have 
		\begin{align*}
		&\left\{\int  \left|\frac{1}{k}\sum_{j\in \mathcal{N}_{n',k}(\boldx)} z_j- \mathbb{E}\left[Z|X=\boldx\right]\right| d\mu(\boldx))\right\}^2\\
		\le &\int  \left|\frac{1}{k}\sum_{j\in \mathcal{N}_{n',k}(\boldx)} z_j- \mathbb{E}\left[Z|X=\boldx\right]\right|^2 d\mu(\boldx),
		\end{align*}
		and it can be seen that the left hand side also converges to 0 in probability. By using the Continuous Mapping Theorem, we can finally show that 
		$\int  \left|\frac{1}{k}\sum_{j\in \mathcal{N}_{n',k}(\boldx)} z_j- \mathbb{E}\left[Z|X=\boldx\right]\right| d\mu(\boldx)) $ converges to 0 in probability.
		
		We let $Z_\boldtheta = \exp\left(\boldtheta^\top \boldf(Y, X)\right)$ be a new random variable and thus we have samples $\left\{z_{\boldtheta,i}, \boldx_i\right\}_{i=1}^{n'} \sim (Z_\boldtheta,X)$ drawn from distribution $Q$, and 
		\begin{align*}
		&Q ~~ \left|\left(N_{n'}\left(\boldtheta;\boldx\right) -  N\left(\boldtheta;\boldx\right)\right) \right| \\
		= &\int \mu(\boldx) \left|\frac{1}{k}\sum_{j\in \mathcal{N}_{n',k}(\boldx)} z_{\boldtheta,j}- \mathbb{E}\left[Z_{\boldtheta}|X=\boldx\right]\right| d\mu(\boldx).
		\end{align*}
		By applying the Theorem \ref{thm.knn.uni.consis}, we can conclude, such $Q ~~ \left|N_{n'}\left(\boldtheta;\boldx\right) -  N\left(\boldtheta;\boldx\right)\right|$ converges 0 in probability for all distribution $(Z_\boldtheta,X)$ indexed by parameter $\boldtheta$.
		Next, we verify the SE of $Q ~~ \left|N_{n'}\left(\boldtheta;\boldx\right) -  N\left(\boldtheta;\boldx\right)\right|$. Given Assumption \ref{ass.bounded.model}, we have
		\begin{align}
		\label{eq.lip}
		&Q \left|N_{n'}\left(\boldtheta;\boldx\right) -  N\left(\boldtheta;\boldx\right)\right| - Q  \left|N_{n'}\left(\boldtheta;\boldx\right) -  N\left(\boldtheta;\boldx\right)\right| \notag\\
		\le & Q \left|N_{n'}\left(\boldtheta;\boldx\right) -  N_{n'}\left(\boldtheta';\boldx\right) + N\left(\boldtheta';\boldx\right) -  N\left(\boldtheta;\boldx\right)\right|\notag\\
		\le &  2M_\text{max} F_\mathrm{max} \|\boldtheta - \boldtheta'\|_2.
		\end{align}
		The last line is due to Mean-value Theorem:
		\begin{align*}
		&\exp\left(\boldtheta^\top f(y,\boldx)\right) - \exp\left(\boldtheta'^\top \boldf(y,\boldx)\right) \\
		\le& \|\boldtheta - \boldtheta'\| \|\boldf(y,\boldx)\exp(\bar{\boldtheta}^\top \boldf(y,\boldx))\|\\
		\le& \|\boldtheta - \boldtheta'\|  F_\mathrm{max} M_\mathrm{max},
		\end{align*}
		where $\bar{\boldtheta}$ is a vector in-between $\boldtheta$ and $\boldtheta'$ elementwisely.
		
		 In fact, \eqref{eq.lip} shows the function $Q ~~ \left|N_{n'}\left(\boldtheta;\boldx\right) -  N\left(\boldtheta;\boldx\right)\right|$ is \emph{Lipschitz continuous} with respect to $\boldtheta$, and according to Lemma 2 in \cite{andrews1992generic}, it implies SE. Similarly, one can show that $N_{n'}(\boldx;\boldtheta)$ is Lipschitz continuous.
		
		Now we can utilize the property of i. boundedness of $\boldtheta$, ii. SE and iii. universal consistency to conclude that \[\sup_\boldtheta Q~~ \left|N_{n'}\left(\boldtheta;\boldx\right) -  N\left(\boldtheta;\boldx\right)\right| \overset{p}{\to} 0,\]
		and due to \eqref{eq.golden.equation}:
		\[\mathrm{Prob}\left(\sup_\boldtheta P\log N_{n'}\left(\boldtheta;\boldx\right) - P\log N\left(\boldtheta;\boldx\right) \ge \epsilon \right)  {\to} 0. \]
		
		Similarly, one can prove that 
		\begin{align*}
		\mathrm{Prob}\left(\sup_\boldtheta P\log N\left(\boldtheta;\boldx\right) - P\log N_{n'}\left(\boldtheta;\boldx\right) \ge \epsilon\right) {\to} 0. 
		\end{align*}
		 As a consequence, the third term in \eqref{eq.upperbound.difference} $\overset{p}{\to} 0$. 
	\end{proof}
	After obtaining Lemma \ref{lem.convergence}, the rest is similar to the proof of Theorem 9.13 in \cite{WassermanAllStat2010}. Let $M(\boldtheta) := P \log g(y,x;\boldtheta)$ and $M_{n,n'}(\boldtheta) := P_n \log g_{n'}(y,x;\boldtheta)$ and
	\begin{align*}
	M(\boldtheta^*)- & M(\hat{\boldtheta}) = M(\boldtheta^*)- M_{n,n'}(\hat{\boldtheta}) + M_{n,n'}(\hat{\boldtheta}) - M(\hat{\boldtheta})\\
	&\le M(\boldtheta^*)- M_{n,n'}(\boldtheta^*) + M_{n,n'}(\hat{\boldtheta}) - M(\hat{\boldtheta})\\
	&\le M(\boldtheta^*) - M_{n,n'}(\boldtheta^*) + \sup_{\boldtheta} \left|M_{n,n'}(\boldtheta) - M(\boldtheta) \right|.
	\end{align*}
	The last line converges to 0 in probability is proved in Lemma \ref{lem.convergence}. Therefore, we can write: 
	\begin{align*}
	\forall \epsilon >0 , P(M(\boldtheta^*)- M(\hat{\boldtheta}) \ge \epsilon) \rightarrow 0.
	\end{align*}
	Due to Assumption \ref{assum.identi}, for an arbitrary choice of $\epsilon_0>0$, if $\|\boldtheta^*- \hat{\boldtheta}\| \ge \epsilon_0$, there must be a $\epsilon>0$, so that $M(\boldtheta^*)- M(\hat{\boldtheta}) > \epsilon$. Therefore, we conclude
	\begin{align*}
	\forall \epsilon_0 >0 , P(\|\boldtheta^*- \hat{\boldtheta} \| \ge \epsilon_0) \le P(M(\boldtheta^*)- M(\hat{\boldtheta}) \ge \epsilon) \rightarrow 0.
	\end{align*}
	Also, $M_n(\hat{\boldtheta}) - M(\boldtheta^*) = M_n(\hat{\boldtheta}) - M(\hat{\boldtheta}) + M(\hat{\boldtheta}) - M(\boldtheta^*)$. Due to Lemma \ref{lem.convergence}, it converges to 0 in probability. Therefore, we have $\ell(\hat{\boldtheta}; \dataset_P, \dataset_Q) \overset{p}{\to} \mathrm{KL}\left[p\|q\right]$.
	\section*{Tuning Parameters in Posterior Ratio Estimation}
	\textbf{$k$ in $k$-NN}: As it is mentioned in Section \ref{sec.synt}, $k$ is tuned via 5-fold cross validation, and is based on the testing criterion:
	\begin{align}
	\label{eq.holdout}
\mathrm{MSE} = 
\frac{1}{|\dataset_{\mathrm{HO}}|}\sum_{j\in\dataset_{\mathrm{HO}}}\left( Z_q^{(j)} - 
	\frac{1}{k}\sum_{jj \in \mathcal{N}(\xqj)} Z_q^{(jj)}\right)^2,
	\end{align} where $\dataset_{\mathrm{HO}}$ is a holdout dataset and $Z_q^{(i)} = \exp\left(\boldtheta^\top \boldf(y_q^{(i)},\boldx_q^{(i)})\right)$. However, such value depends on $\boldtheta$ and it changes every iteration during the gradient decent. Instead of tuning $k$ after each iteration, we follow a simple heuristics: 1) Fix $k$ and run gradient descent. 2) choose a suitable $k$ that minimizes \eqref{eq.holdout}. 1) and 2) are repeatedly carried out until converge. Such heuristics have very good performance in experiments.
\end{document}